\title{\LARGE \bf
Incentivized Lipschitz Bandits
}
\author{Sourav Chakraborty$^*$, Amit Kiran Rege$^*$, Claire Monteleoni, Lijun Chen% <-this % stops a space
\thanks{$^*$Equal Contribution}% <-this % stops a space
\thanks{All authors are with the Department of Computer Science,
        University of Colorado, Boulder, CO 80309, USA
        {\tt\footnotesize \{sourav.chakraborty, amit.rege, cmontel, lijun.chen\}@colorado.edu}}%
\thanks{Claire Monteleoni is also with INRIA, Paris, France.}
}
\newtheorem{theorem}{Theorem}
\newtheorem{lemma}{Lemma}
\newtheorem{assumption}{Assumption}
\begin{document}

\maketitle
\thispagestyle{empty}
\pagestyle{empty}

%%%%%%%%%%%%%%%%%%%%%%%%%%%%%%%%%%%%%%%%%%%%%%%%%%%%%%%%%%%%%%%%%%%%%%%%%%%%%%%%
\begin{abstract}
We study incentivized exploration in multi-armed bandit (MAB) settings with infinitely many arms modeled as elements in continuous metric spaces. Unlike classical bandit models, we consider scenarios where the decision-maker (principal) incentivizes myopic agents to explore beyond their greedy choices through compensation, but with the complication of reward drift—biased feedback arising due to the incentives. We propose novel incentivized exploration algorithms that discretize the infinite arm space uniformly and demonstrate that these algorithms simultaneously achieve sublinear cumulative regret and sublinear total compensation. Specifically, we derive regret and compensation bounds of $\Tilde{O}(T^{d+1/d+2})$, with  $d$ representing the covering dimension of the metric space. Furthermore, we generalize our results to contextual bandits, achieving comparable performance guarantees. We validate our theoretical findings through numerical simulations.
\end{abstract}

%%%%%%%%%%%%%%%%%%%%%%%%%%%%%%%%%%%%%%%%%%%%%%%%%%%%%%%%%%%%%%%%%%%%%%%%%%%%%%%%
\section{Introduction} \label{sec:intro}
The multi-armed bandit (MAB) problem is a framework \cite{slivkins2021introduction, lattimore_2020} for sequential decision-making under uncertainty, with applications in search engines (\cite{search-sys}), clinical trials (\cite{gittins1, berry, william}), recommendation systems (\cite{bouneffouf:hal-00753401, Li_2010}), financial portfolio design (\cite{brochu}), and cognitive radio networks (\cite{cogradio}). In the classical stochastic MAB setting, a decision-maker selects an arm at each time step, receives a reward, and uses this feedback to inform future choices. The objective is to minimize the cumulative regret, defined as the difference between the total reward obtained by always selecting the optimal arm and the actual reward accrued by the algorithm. Achieving this goal necessitates a balance between leveraging current knowledge (i.e., exploitation) and gathering new information (i.e., exploration). Over-exploitation may preclude the discovery of superior alternatives, whereas excessive exploration limits immediate reward accumulation.

A key assumption in traditional MAB formulations is that the decision-maker (the \emph{principal}) both selects and pulls the arm. However, in many real-world scenarios, these roles are distinct. The principal may delegate decision-making to \emph{agents} whose objectives are misaligned with the principal’s long-term interests. Specifically, while the principal seeks to optimize long-term performance through exploration, agents may be myopic, favoring immediate rewards and thus engaging in pure exploitation. This misalignment of objectives calls for strategies that effectively align agent incentives with the principal’s exploratory objectives.

For instance, consider an e-commerce platform such as Amazon. The platform (principal) seeks to maximize long-term revenue by encouraging customers (agents) to explore a diverse range of products (arms) to identify the most lucrative offerings. However, customers naturally tend to select products with the highest current ratings, leading to limited exploration and potential suboptimal long-term outcomes \cite{bubeck2012regret, suttonbarto}. Similar challenges arise in content recommendation platforms such as Netflix, where viewers may repeatedly select well-rated but suboptimal content, impeding algorithmic optimization.
\begin{figure}[h]
    \centering
    \includegraphics[width=0.4\textwidth]{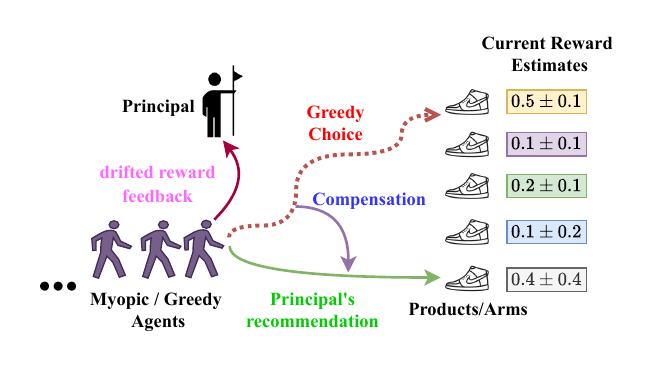}
    \caption{An illustration of the Incentivized Exploration setup with a Principal and a stream of agents.}
    \label{fig:ic}
\end{figure}
To address such misaligned incentives, \emph{incentivized exploration} has emerged as a promising approach \cite{fraz, mansour, wang, immorlica2019bayesian}. Here, the principal provides compensation to agents who select arms other than their empirically best option, thereby fostering exploration while managing the associated costs. Early works on incentivized MAB models \cite{immorlica2019bayesian, wang, Hirnschall_Singla_Tschiatschek_Krause_2018, Han_2015, Liu_Ho_2018} assumed unbiased agent feedback. However, empirical studies suggest that incentives such as coupons or monetary rewards often introduce biases, inflating reported satisfaction levels and distorting the reward structure \cite{martensen, Ehsani2015EffectOQ}. This phenomenon, termed \emph{reward drift}, complicates the exploration-exploitation trade-off, as inflated rewards can erroneously signal suboptimal arms as optimal. Recent work by \cite{liu20} has addressed this issue, demonstrating that incentivized exploration remains effective even in the presence of biased feedback.

While these approaches are effective for small, finite arm sets, they become impractical as the number of arms grows extremely large—often warranting a formalization as an infinite space. This challenge is increasingly relevant in modern applications, where platforms like Amazon manage millions of products, and streaming services such as Netflix curate vast content libraries. This naturally raises the central question that underpins this work: \textit{Can we achieve effective incentivized exploration in settings with an infinite number of arms, particularly when reward feedback is subject to drift?}\\

\noindent\textbf{Contributions.} In this work, we extend the incentivized exploration framework \cite{wang,liu20}, illustrated in Fig.~\ref{fig:ic}, to settings with an \emph{infinite number of arms}, where arms are conceptualized as elements in a continuous metric space.  
In this setting, an agent’s compensation from the principal is determined by the difference in estimated rewards between the principal’s recommended arm and the agent’s myopic (greedy) choice. However, the agent’s feedback is biased due to a reward drift term that increases with compensation, distorting the learning process.

To address these challenges, we make the following key contributions:  
\textbf{(1)} We propose \emph{incentivized exploration algorithms} that guarantee sublinear regret and sublinear compensation \emph{simultaneously} in infinite-armed bandits.  
\textbf{(2)} We analyze a \emph{uniform discretization} approach, where the arm space is discretized at a fixed resolution, and prove that an appropriately tuned UCB-based algorithm achieves regret and compensation bounds of \( \Tilde{O} ( T^{(d+1)/(d+2)} ) \), where \( d \) is the covering dimension of the metric space. \textbf{(3)} We extend our framework to \emph{contextual bandits}, where rewards depend on both the arm and an observed context, and demonstrate that a structured discretization of the arm-context space yields regret and compensation bounds of \( \Tilde{O} (T^{(d + 1)/(d + 2)}) \), where \( d = d_x + d_a \) and \( d_x \) and \( d_a \) denote the covering dimensions of the context and arm spaces, respectively. \textbf{(4)} Finally, we support our theoretical findings with empirical results demonstrating that our algorithms achieve significant improvements in both regret and compensation efficiency over naive baselines. Proofs are provided in the appendix (\cite{appx}).\\

\noindent\textbf{Related Work.}  Early works by \cite{fraz}, \cite{kremer}, and \cite{che} introduced Bayesian models for incentivized exploration, focusing on minimizing both regret and compensation in discounted settings. \cite{mansour} extended these results to the non-discounted case, proposing an algorithm with \(O(\sqrt{T})\) regret. Building on this, \cite{wang} analyzed non-Bayesian and non-discounted environments, achieving \(O(\log T)\) regret and compensation guarantees. 

Further advancements were made by \cite{liu20}, who addressed the challenge of biased feedback induced by incentives, demonstrating that their algorithm maintains \(O(\log T)\) regret even under reward drift. More recently, \cite{chakraborty2024incentivized} generalized this setting to non-stationary rewards, establishing sublinear bounds on both regret and compensation. Our work builds upon the framework introduced by \cite{liu20}, extending it to settings where arms form an infinite metric space. Additionally, we generalize the incentivization mechanisms and analyze their impact on regret rates in this more complex environment.

Robustness to adversarial interference in MABs has also been extensively studied. \cite{lykouris2018stochastic} proposed a multi-layer elimination algorithm to mitigate adversarial corruptions in stochastic bandits, while \cite{feng2020intrinsic} explored rational arm behavior, establishing logarithmic regret bounds. Research on Bayesian Incentive Compatible (BIC) models, such as those by \cite{mansour2019bayesian}, \cite{cohen}, and \cite{sellke2021price}, examines how principals can strategically encourage beneficial agent behavior, closely linking to Bayesian Persuasion \cite{kamenica}. For a comprehensive overview of these methods, see \cite{slivkins2021introduction}.

The infinite-armed bandit setting has been studied in various contexts \cite{Kleinberg-lipschitz}. However, our work is the first to investigate this setting in the realm of incentivized exploration.

%%%%%%%%%%%%%%%%%%%%%%%%%%%%%%%%%%%%%%%%%%%%%%%%%%%%%%%%%%%%%%%%%%%%%%%%%%%%%%%%
\section{Preliminaries}
\noindent\textbf{Stochastic Bandits.} In the standard stochastic bandit framework, a decision maker selects an arm $a$ from a set of $K$ arms at each discrete time step $t \in \{1, 2, \dots, T\}$, based on the historical sequence of arm selections and observed rewards. The reward $\rho_t(a)$ associated with each arm $a$ is assumed to follow an i.i.d. random distribution $\mathcal{D}(a)$ with mean $\mu(a)$. It is further assumed that rewards lie within the interval $[0,1]$. Let $a^*$ denote the optimal arm, which yields the highest expected reward. The regret $\mathcal{R}_T$ incurred by an algorithm is defined as follows:
\begin{align*}
    \mathcal{R}_T \triangleq T \cdot \mu(a^*) - \sum_{t=1}^T \rho_t(a_t),
\end{align*}
where $a_t$ is the arm chosen by the algorithm at time $t$. The objective is to devise algorithms that achieve sublinear expected regret as a function of $T$. Common algorithms that achieve this objective include UCB1 (\cite{auer2002finite, lai1985asymptotically}), Thompson Sampling (\cite{russo2018tutorial}), and the $\epsilon$-Greedy algorithm (\cite{auer2002finite, suttonbarto}).\\

\noindent\textbf{Incentivized Exploration.} In practical settings, the entity selecting arms (the principal) and the entity executing the selection (the agent) may differ, leading to potential misalignment of interests. Agents tend to prioritize exploitation, selecting arms with the highest observed empirical rewards, whereas the principal seeks to balance exploration and exploitation. To mitigate this, the principal provides compensation to incentivize exploration \cite{wang}. At each time step \(t\), the agent selects an action \(a_t\) based on the principal's recommendation and receives compensation \(\kappa_t\), after which the agent observes a reward \(r_t\) and reports it to the principal.

The principal can use a bandit algorithm such as UCB1 or \(\epsilon\)-Greedy to balance exploration and exploitation while minimizing long-term regret. Compensation \(\kappa_t\) is determined by the difference between the empirical rewards of the agent's greedy choice \(g_t\) and the principal’s recommended arm \(a_t\). However, this compensation may introduce bias in the agent’s feedback, resulting in a distorted observed reward $r_t = \mathcal{F}(\rho_t(a_t), \kappa_t)$ instead of the true reward \(\rho_t(a_t)\). Following \cite{liu20}, we model this effect using an additive drift framework, where the observed reward is $r_t = \rho_t(a_t) + \gamma_t(\kappa_t)$, with \(\gamma_t(\cdot)\) being a non-decreasing function of \(\kappa_t\). To ensure stability, we impose the following constraint:  
\begin{assumption} \label{drift-assm}
The reward drift function \(\gamma_t(a)\) is non-decreasing, satisfies \(\gamma_t(0) = 0\), and is Lipschitz continuous, i.e., there exists a constant \(\ell_t\) such that  
\[
    |\gamma_t(a) - \gamma_t(a')| \leq \ell_t |a - a'|, \quad \forall a, a'.
\]
\end{assumption}
The empirical reward $\hat{\mu}_t(a)$ is thus the average of these biased rewards. In addition to minimizing regret, the principal aims to minimize the total compensation given:
\[
\mathcal{C}_T \triangleq \sum_{t=1}^T \kappa_t = \sum_{t=1}^T \left(\hat{\mu}_t(g_t) - \hat{\mu}_t(a_t)\right).
\]
We evaluate the performance of incentivized exploration strategies based on the expected regret and expected compensation, aiming to design algorithms that achieve sublinear regret and compensation simultaneously.

%%%%%%%%%%%%%%%%%%%%%%%%%%%%%%%%%%%%%%%%%%%%%%%%%%%%%%%%%%%%%%%%%%%%%%%%%%%%%%%%
\section{Problem Formulations} \label{prob-form}
\noindent\textbf{Incentivized Exploration with Stochastic Rewards.} \label{stoch-prob-form} We consider a variant of the multi-armed bandit problem with an uncountable set of arms $A$, structured as a metric space $(A, \Phi)$, where the metric $\Phi$ defines the distance between arms. A natural example is $A \subseteq [0,1]^d$, a subset of the $d$-dimensional Euclidean space equipped with the $\ell_2$ metric, denoted $(A, \ell_2)$. Each arm $a \in A$ corresponds to a $d$-dimensional vector, and its reward is independently drawn from a probability distribution supported on $[0,1]$ with mean $\mu(a)$. We denote such a problem instance by $(A, \Phi, \mu)$.

Solving this problem efficiently is generally infeasible without additional assumptions on the reward function \cite{Kleinberg-lipschitz}. To address this, we impose a Lipschitz condition, ensuring that similar arms yield similar expected rewards:
\begin{equation} \label{lips}   
    |\mu(a) - \mu(a')| \leq L \cdot \Phi(a, a'), \quad \forall a,a' \in A,
\end{equation}
where $L$ is a Lipschitz constant. We analyze this problem in the context of incentivized exploration \cite{wang, liu20}, a paradigm relevant to large-scale platforms such as Amazon and Netflix, where the platform (principal) strategically encourages users (agents) to explore various products (arms). Given the high-dimensional nature of such platforms, modeling arms as elements in a metric space allows for an effective representation of structural dependencies, such as categorical similarities and user preferences.

To ensure computational feasibility, we discretize $A$ into a finite subset $A_0 \subset A$. Define \(\mu_0^* = \sup_{a_0 \in A_0} \mu(a_0)\)
as the maximum expected reward attainable within $A_0$, and introduce the approximation error \(\delta(A_0) = \mu^* - \mu_0^*\), where $\mu^* = \sup_{a \in A} \mu(a)$ denotes the global optimal reward. Let $\mathcal{R}_T(A)$ be the cumulative regret of a given learning algorithm over a time horizon $T$, given by:
\begin{align} \label{our-reg-form}
    \mathbb{E}[\mathcal{R}_T(A)] = \mathbb{E} \left[ \sum_{t=0}^{T} \left(\mu^* - \rho_t(a_t) \right) \right].
\end{align}
The objective is to minimize expected regret by optimizing the selection of the discrete approximation set $A_0$. To approximate $A$ efficiently, we construct $A_0$ using a $\psi$-covering of $A$ which consists of subsets $S_i \subset A$ satisfying:
\[
\operatorname{diam}(S_i) \leq \psi, \quad \bigcup_i S_i = A.
\]
A representative arm is selected from each subset to form $A_0$. We refer the reader to the appendix (\cite{appx}) for more information regarding metric spaces and covering numbers. To further encourage exploration, the principal introduces a compensation \(\kappa_t\), modifying the observed reward to $r_t = \rho_t(a_t) + \gamma_t(\kappa_t),$ where \(\gamma_t(\cdot)\) is an increasing function modeling the agent's response to incentives. The compensation is given by  
\begin{equation}
    \kappa_t = \hat{\mu}_t(g_t) - \hat{\mu}_t(a_t),
\end{equation}
where \(g_t\) is the agent’s greedy choice based on empirical estimates \(\hat{\mu}_t(a)\). The principal's objective is to minimize both regret and total incentive cost, given by $\mathcal{C}_T(A) = \sum_{t=1}^T \kappa_t$.
The following sections establish sublinear bounds on both regret and compensation, leveraging covering dimension \(d\), Lipschitz constant \(L\), and discretization scale \(\psi\).\\

\noindent\textbf{Incentivized Exploration with Contextual Information.} \label{contextual-setting} Building upon the incentivized exploration framework introduced, we extend the setting to a more general and practically relevant class of problems known as contextual bandits. In this extension, the principal observes a context before selecting an arm, and the expected reward depends jointly on both the context and the chosen arm. This model is particularly suited to large-scale decision-making scenarios where reward distributions are influenced by external factors. Examples include personalized recommendation systems where user profiles determine product relevance, or adaptive pricing strategies where market conditions affect optimal pricing decisions.

We consider a contextual bandit problem where the space of contexts and arms are modeled as metric spaces. Specifically, let the context space be represented as \((X, \Phi_X)\), where \(X \subseteq [0,1]^{d_x}\) is equipped with a metric \(\Phi_X\), and let the arm space be given by \((A, \Phi_A)\), where \(A \subseteq [0,1]^{d_a}\) is equipped with a metric \(\Phi_A\). The reward of an arm $a \in A$ is independently drawn from a probability distribution, supported on $[0,1]$, with expectation $\nu(a, x)$ for a particular context $x \in X$. We denote such a problem instance by $(A, X, \Phi_X, \Phi_A, \nu)$.

We assume that the expected reward function \(\nu: A \times X \to \mathbb{R}\) exhibits a structured dependence on both the context and the arm, characterized by a Lipschitz continuity condition, consistent with \cite{slivkins2011contextual}. This regularity assumption ensures that small perturbations in the context or arm do not induce arbitrarily large variations in expected rewards. Specifically, we impose an \(L\)-Lipschitz condition on \(\nu\) with respect to the product metric, defined as
\begin{equation}
   \Phi\bigl((a,x),(a',x')\bigr) = \Phi_X(x,x') + \Phi_A(a,a'),
\end{equation}
where \(\Phi_X\) and \(\Phi_A\) denote the respective metrics on the context and arm spaces. Consequently, for all \((a, x), (a', x') \in A \times X\), the reward function satisfies
\begin{equation}
   \bigl|\nu(a,x) - \nu(a',x')\bigr|
   \leq L \cdot \bigl[\Phi_X(x,x') + \Phi_A(a,a')\bigr].
\end{equation}
where \(L > 0\) is a Lipschitz constant that quantifies the maximum rate at which the expected reward can change as a function of variations in the context or arm. The Lipschitz property plays a crucial role in defining the complexity of learning in this setting, as it imposes a smoothness constraint that facilitates the use of discretization techniques for efficient decision-making.

Since \(X\) and \(A\) are metric spaces, their covering dimensions, denoted \(d_x\) and \(d_a\), characterize their intrinsic complexity. The covering dimension quantifies the asymptotic scaling of the covering number, governing the granularity of finite approximations. As the metric on \(A \times X\) is the sum of \(\Phi_X\) and \(\Phi_A\), standard results yield \(d = d_x + d_a\).

Solving the contextual Lipschitz bandit problem in continuous spaces is intractable due to their infinite dimensionality. To overcome this, we approximate the product space \(A \times X\) with a finite \(\psi\)-covering. We define finite sets \(X_0 \subseteq X\) and \(A_0 \subseteq A\) such that, for each context \(x \in X\), there exists \(x_0 \in X_0\) with \(\Phi_X(x, x_0) \leq \psi\), and for each arm \(a \in A\), there exists \(a_0 \in A_0\) with \(\Phi_A(a, a_0) \leq \psi\). This discretization ensures a finite representative set, facilitating efficient approximation of the continuum.

Let the armed played by the algorithm at time $t$ be denoted by $a_t$ for the input context $x_t$, and \(a^*(x_t) \in \arg\max_{a\in A} \nu(a, x_t)\) be an optimal arm for context \(x_t\). The instantaneous regret at round \(t\) is \(\nu(a^*(x_t), x_t) - \nu(a_t, x_t),\) yielding the expectation of the cumulative regret $\mathcal{R}_T(A)$ for the arms set $A$ and time horizon $T$ as, 
\begin{align}
    \mathbb{E}[\mathcal{R}_T(A)] = \sum_{t=1}^T \Bigl[ \nu(a^*(x_t), x_t) - \mathbb{E}[\nu(a_t, x_t)] \Bigr].
\end{align}
In contextual bandits, a myopic agent may favor immediate rewards, neglecting exploration. To mitigate this, the principal introduces a compensation term \(\kappa_t\) to incentivize exploratory actions, modifying the observed reward to \(r_t = \rho_t(a_t) + \gamma_t(\kappa_t)\), where \(\gamma_t(\cdot)\) is an increasing function modeling the agent’s response to compensation. The compensation is set as \(\kappa_t = \hat{\nu}_t(g_t, x_t) - \hat{\nu}_t(a_t, x_t),\) where \(g_t\) is the agent’s greedy choice based on empirical estimates \(\hat{\nu}_t\). 

This modifies the principal’s objective to minimizing both regret and total compensation cost, given by \(\mathcal{C}_T(A) = \sum_{t=1}^T \kappa_t\). We establish sublinear bounds on both regret and incentive cost by leveraging the discretization scale \(\psi\), covering dimension \(d\), and Lipschitz constant \(L\). The following sections formalize these results and optimal strategies for balancing exploration and incentives.

%%%%%%%%%%%%%%%%%%%%%%%%%%%%%%%%%%%%%%%%%%%%%%%%%%%%%%%%%%%%%%%%%%%%%%%%%%%%%%%%
\section{Incentivized Exploration with Uniform Discretization on Infinite Arms} \label{sec:uniform}
In this section, we introduce an incentivized exploration framework where a principal selects actions from a continuous space while ensuring effective exploration through a uniform discretization scheme. The principal employs a compensation mechanism to align the agent’s choices with the desired exploration strategy, balancing regret minimization and incentive costs. We analyze this approach in both stochastic (Algorithm \ref{meta-inc-uniform}) and contextual (Algorithm \ref{meta-inc-contextual}) bandit settings, providing theoretical guarantees on expected regret and compensation.\\

\noindent\textbf{Incentivized Exploration with Stochastic Rewards.} 
\begin{algorithm}[t]
\caption{Incentivized Exploration in Infinite-Armed Stochastic Bandits}
\label{meta-inc-uniform}
\begin{algorithmic}[1]
\REQUIRE Metric space for arms \((A, \Phi)\), time horizon \(T\), Lipschitz constant \(L\)
\STATE Set: \(\psi \gets O\left(T^{-1/(d+2)} L^{-2/(d+2)} (\log T)^{1/(d+2)}\right)\)
\STATE Construct a \(\psi\)-covering \(A_0 \subseteq A\)
\FOR{\(t = 1,2,\dots,T\)} 
    \STATE Select arm: \(a_t \gets \arg\max_{a \in A_0} \hat{\mu}_t(a) + \sqrt{\frac{2\log t}{N_t(a)}}\)
    \STATE Agent picks: \(g_t \gets \arg\max_{a \in A_0} \hat{\mu}_t(a)\)
    \STATE Compensation: \(\kappa_t \gets \hat{\mu}_t(g_t) - \hat{\mu}_t(a_t)\)
    \STATE Pull \(a_t\), observe reward: \(r_t = \rho_t(a_t) + \gamma_t(\kappa_t)\)
    \STATE Update: \(\hat{\mu}_{t+1}(a_t) \gets \text{Update based on } r_t\)
\ENDFOR
\end{algorithmic}
\end{algorithm}
Algorithm~\ref{meta-inc-uniform} addresses the problem of incentivizing exploration in stochastic bandit settings over a compact metric space \((A, \Phi)\). The algorithm begins by computing a discretization parameter \(\psi = O\left(T^{-1/(d+2)} L^{-2/(d+2)} (\log T)^{1/(d+2)}\right)\), which defines a finite \(\psi\)-covering subset \(A_0 \subseteq A\) (lines 1–2). This discretization enables tractable learning in an otherwise infinite action space.

At each round \(t\), the principal selects an arm \(a_t \in A_0\) based on a UCB rule (line 4): \(a_t \gets \arg\max_{a \in A_0} \hat{\mu}_t(a) + \sqrt{2 \log t / N_t(a)}\), where \(\hat{\mu}_t(a)\) is the empirical mean reward and \(N_t(a)\) the count of prior selections. This balances exploration and exploitation by favoring uncertain yet promising arms. Meanwhile, the agent selects \(g_t = \arg\max_{a \in A_0} \hat{\mu}_t(a)\) (line 5), purely maximizing empirical reward. Since \(g_t\) may differ from the principal’s choice, the principal offers compensation \(\kappa_t = \hat{\mu}_t(g_t) - \hat{\mu}_t(a_t)\) (line 6), ensuring the agent remains indifferent between \(a_t\) and their preferred arm.

Upon receiving the incentive, the agent pulls \(a_t\), observing a drifted reward \(r_t = \rho_t(a_t) + \gamma_t(\kappa_t)\) (line 7), where \(\rho_t(a_t)\) is the stochastic reward and \(\gamma_t(\kappa_t)\) models the compensation’s effect. The principal then updates \(\hat{\mu}_{t+1}(a_t)\) (line 8).

By repeating this process over \(T\) rounds, the algorithm effectively aligns incentives while ensuring efficient exploration. Its performance is characterized by the expected regret \(\mathbb{E}[\mathcal{R}_T]\) and total compensation \(\mathbb{E}[\mathcal{C}_T]\), as analyzed in the following section.

\begin{theorem}
\label{thm-reg-ucb}
Consider Algorithm~\ref{meta-inc-uniform} run on a metric space \((A, \Phi)\), where arms are discretized using a \(\psi\)-covering \(A_0 \subseteq A\) satisfying \(|A_0| \leq O(1 / \psi^d)\), with \(d\) denoting the covering dimension of \((A, \Phi)\). If \(\psi = O\left(T^{-1/(d+2)} L^{-2/(d+2)} (\log T)^{1/(d+2)}\right)\), then the expected regret and compensation satisfy:
\[
\mathbb{E}[\mathcal{R}_T] = \Tilde{O}\left( L^{\frac{d}{d+2}} T^{\frac{d+1}{d+2}} \right), \quad 
\mathbb{E}[\mathcal{C}_T] = \Tilde{O}\left( L^{\frac{d}{d+2}} T^{\frac{d+1}{d+2}} \right),
\]
where \(L\) is the Lipschitz constant of the reward function \(\mu(a)\), and the hidden constants in the \(O(\cdot)\) and \(\Tilde{O}(\cdot)\) notations depend on the drift parameter \(\ell\).
\end{theorem}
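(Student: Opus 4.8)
The plan is to decompose the expected regret into a discretization bias term and a learning (UCB) term, bound each separately, and then show that the chosen $\psi$ balances them; the compensation bound will follow from the same machinery since compensation is controlled by confidence-interval widths.

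First I would handle the \emph{discretization error}. By the Lipschitz condition \eqref{lips}, for any $a \in A$ there is $a_0 \in A_0$ with $\Phi(a,a_0) \le \psi$, hence $\mu^* - \mu_0^* = \delta(A_0) \le L\psi$. Thus running the algorithm on $A_0$ incurs at most $T L \psi$ regret relative to the true optimum $\mu^*$, on top of the regret of the UCB subroutine measured against $\mu_0^*$. Second, for the \emph{UCB regret on the finite set} $A_0$: since rewards are observed with an additive drift $\gamma_t(\kappa_t)$ which is bounded (because $\kappa_t$ itself is bounded by $O(\sqrt{\log t / N_t})$-type confidence widths — this needs to be established, see below), the empirical means $\hat\mu_t$ concentrate around $\mu$ up to an $O(\ell \cdot \text{bias})$ additive term. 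A standard UCB1 analysis over $|A_0| \le O(1/\psi^d)$ arms gives a regret of order $\sqrt{|A_0| T \log T} = \tilde O(\sqrt{T/\psi^d})$ against $\mu_0^*$ (or, in the gap-dependent form, $\sum_{a \in A_0} \frac{\log T}{\Delta_a}$, but the gap-free bound is cleaner here and suffices). Combining, $\mathbb{E}[\mathcal{R}_T] = \tilde O\!\left(T L \psi + \sqrt{T \psi^{-d}\log T}\right)$; substituting $\psi \asymp (T^{-1} L^{-2}\log T)^{1/(d+2)}$ equalizes the two terms (up to logs) and yields $\tilde O\!\left(L^{d/(d+2)} T^{(d+1)/(d+2)}\right)$.

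For the \emph{compensation}, the key observation is that $\kappa_t = \hat\mu_t(g_t) - \hat\mu_t(a_t)$, where $g_t$ maximizes $\hat\mu_t$ and $a_t$ maximizes the UCB index. On the high-probability event that all confidence intervals are valid, one shows $\hat\mu_t(g_t) \le \hat\mu_t(a_t) + \sqrt{2\log t / N_t(a_t)} + (\text{drift slack})$, because otherwise $a_t$ would not have been the UCB-maximizer — i.e. $\kappa_t$ is at most the confidence radius of the pulled arm plus an $O(\ell_t)$-scaled correction coming from the bias accumulated in $\hat\mu$. Summing $\sum_t \sqrt{2\log t / N_t(a_t)}$ via the standard pigeonhole/Cauchy–Schwarz argument over $|A_0|$ arms again gives $\tilde O(\sqrt{|A_0| T}) = \tilde O(\sqrt{T \psi^{-d}})$, matching the UCB regret term, so $\mathbb{E}[\mathcal{C}_T]$ obeys the same bound with the same choice of $\psi$. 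The residual low-probability event contributes $o(1)$ since rewards lie in $[0,1]$ and drift is bounded.

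The main obstacle is the circularity between compensation and concentration: the drift $\gamma_t(\kappa_t)$ biases $\hat\mu_t$, but $\kappa_t$ is itself defined through $\hat\mu_t$. I would break this by an inductive argument over $t$: assuming the confidence intervals have held so far, the accumulated drift in $\hat\mu_t(a)$ is bounded (each per-round drift is $\le \ell_t \kappa_t$ and $\kappa_t$ is in turn bounded by the current confidence width, so the average drift in $N_t(a)$ pulls telescopes to something of order $\ell \sqrt{\log t / N_t(a)}$ — this is exactly where the hidden dependence on $\ell$ in the theorem statement enters), which lets the confidence interval at time $t+1$ be maintained after inflating the radius by a constant factor depending on $\ell$. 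This is the delicate step; once the inflated-confidence-interval invariant is in place, the regret and compensation sums are routine UCB bookkeeping, and the balancing of $\psi$ is the elementary optimization sketched above.
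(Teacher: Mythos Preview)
Your proposal is correct and follows the same overall architecture as the paper: decompose regret into the $TL\psi$ discretization error plus the finite-arm UCB regret $\tilde O(\sqrt{|A_0|\,T\log T})$ on $A_0$, then balance by choosing $\psi$. The drift--circularity you flag is precisely the content of the bound $\Gamma_t(a)\le 2\ell\sqrt{2N_t(a)\log T}$ that the paper imports from \cite{liu20}; your inductive sketch is how that bound is proved, and it feeds into the same inflated-radius clean-event analysis yielding $\theta(a_t)\le 2(\ell+1)\sqrt{2\log T/N_t(a_t)}$.

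The one genuine difference is in the compensation step. You bound $\kappa_t\le\sqrt{2\log T/N_t(a_t)}$ (directly from the UCB selection rule, since $g_t$'s index cannot exceed $a_t$'s) and then sum over $t$ via Cauchy--Schwarz to get $\tilde O(\sqrt{|A_0|\,T})$. The paper instead pushes for a \emph{per-round} bound $\kappa_t\le L\psi/(\sqrt{2}(\ell+1))$ by asserting a lower bound $N_t(a_t)\ge 8(\ell+1)^2\log T/(L\psi)^2$ and then multiplies by $T$ to obtain $TL\psi/(\sqrt{2}(\ell+1))$. Both routes land on the same final rate because $TL\psi$ and $\sqrt{|A_0|\,T\log T}$ coincide at the optimal $\psi$; your summation argument is more self-contained and sidesteps the need to justify a uniform lower bound on play counts.
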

These bounds indicate that both regret and compensation grow sublinearly, ensuring the principal’s cost remains manageable while achieving effective exploration. The regret bound aligns with known results for bandit settings with covering constraints, confirming that our incentivized approach retains efficiency despite the additional compensation mechanism. \\

\noindent\textbf{Incentivized Exploration with Contextual Information.} 
\begin{algorithm}[t]
\caption{Incentivized Exploration in Infinite-Armed Contextual Bandits}
\label{meta-inc-contextual}
\begin{algorithmic}[1]
\REQUIRE Metric spaces \((A, \Phi_A), (X, \Phi_X)\), snapping function \(\xi: X \to X_0\), horizon \(T\)
\STATE Set: \(\psi \gets O\left(T^{-1/(d+2)} L^{-2/(d+2)} (\log T)^{1/(d+2)}\right)\)
\STATE Construct \(\psi\)-covering \(A_0 \times X_0 \subset A \times X\)
\FOR{\(t = 1,2,\dots,T\)}
    \STATE Observe context \(x_t \in X\), snap: \(x_0 \gets \xi(x_t)\)
    \STATE Principal picks arm:
    \[
    a_t \gets \arg\max_{a_0 \in A_0} \hat{\nu}_t(a_0, x_0) + \sqrt{\frac{2\log t}{N_t(a_0, x_0)}}
    \]
    \STATE Agent picks: \(g_t \gets \arg\max_{a_0 \in A_0} \hat{\nu}_t(a_0, x_0)\)
    \STATE Compensation: \(\kappa_t \gets \hat{\nu}_t(g_t, x_0) - \hat{\nu}_t(a_t, x_0)\)
    \STATE Reward: \(r_t = \rho_t(a_t) + \gamma_t(\kappa_t)\)
    \STATE Update: \(\hat{\nu}_{t+1}(a_t, x_0) \gets \text{Update based on } r_t\)
\ENDFOR
\end{algorithmic}
\end{algorithm}
Algorithm~\ref{meta-inc-contextual} addresses the problem of incentivized learning in a contextual bandit setting, where the principal must guide exploration while contending with an agent who acts based on personal utility. The algorithm begins by computing a discretization parameter \(\psi = O\left(T^{-1/(d+2)} L^{-2/(d+2)} (\log T)^{1/(d+2)}\right)\), which is used to construct a finite \(\psi\)-covering \(A_0 \times X_0 \subset A \times X\) (lines 1–2). This covering provides a structured discretization of the joint action-context space, allowing the principal to approximate the continuous setting with a finite grid for efficient decision-making.

At each round \(t\), the environment reveals a context \(x_t \in X\) (line 4). The principal maps \(x_t\) to a nearby representative point \(x_0 = \xi(x_t) \in X_0\) using a predefined snapping function \(\xi\). Given this snapped context, the principal selects an arm \(a_t \in A_0\) according to a UCB rule (line 5): \(a_t \gets \arg\max_{a_0 \in A_0} \hat{\nu}_t(a_0, x_0) + \sqrt{2 \log t / N_t(a_0, x_0)}\), where \(\hat{\nu}_t(a_0, x_0)\) is the empirical mean reward estimate and \(N_t(a_0, x_0)\) denotes the number of times that pair has been selected.

Concurrently, the agent selects an arm \(g_t = \arg\max_{a_0 \in A_0} \hat{\nu}_t(a_0, x_0)\) based solely on empirical estimates (line 6), choosing the arm that currently appears most rewarding. Since \(g_t\) may not match the principal’s chosen arm \(a_t\), a compensation \(\kappa_t = \hat{\nu}_t(g_t, x_0) - \hat{\nu}_t(a_t, x_0)\) is offered (line 7). This ensures the agent is indifferent between its own choice and the principal’s intended action, maintaining strategic alignment. After receiving the compensation, the agent pulls arm \(a_t\) and observes the reward \(r_t = \rho_t(a_t) + \gamma_t(\kappa_t)\) (line 8), where \(\rho_t(a_t)\) is the true stochastic reward, and \(\gamma_t(\kappa_t)\) captures the effect of monetary incentives. The principal then updates the empirical estimate \(\hat{\nu}_{t+1}(a_t, x_0)\) using the observed reward (line 9). Through this process, the algorithm effectively balances exploration and exploitation while ensuring incentive compatibility. Its performance is analyzed in terms of expected cumulative regret \(\mathbb{E}[\mathcal{R}_T]\) and total compensation \(\mathbb{E}[\mathcal{C}_T]\), as formalized in the following theorem.

\begin{theorem}
\label{thm-reg-ctx}
Consider Algorithm~\ref{meta-inc-contextual} executed on a metric space \((A \times X, \Phi)\), where a \(\psi\)-covering discretization \(A_0 \times X_0 \subseteq A \times X\) is used with \(|A_0 \times X_0| \leq O(1 / \psi^d)\), and \(d = d_a + d_x\) denotes the covering dimension of \((A \times X, \Phi)\). If \(\psi = O\left(T^{-1/(d+2)} L^{-2/(d+2)} (\log T)^{1/(d+2)}\right)\), then the expected regret and compensation satisfy:
\[
\mathbb{E}[\mathcal{R}_T] = \Tilde{O}\left( L^{\frac{d}{d+2}} T^{\frac{d+1}{d+2}} \right), \quad 
\mathbb{E}[\mathcal{C}_T] = \Tilde{O}\left( L^{\frac{d}{d+2}} T^{\frac{d+1}{d+2}} \right),
\]
where \(L\) is the Lipschitz constant of the reward function \(\nu(a, x)\), and the hidden constants in the \(O(\cdot)\) and \(\Tilde{O}(\cdot)\) notations depend on the drift parameter \(\ell\).
\end{theorem}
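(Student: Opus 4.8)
The plan is to follow the proof of Theorem~\ref{thm-reg-ucb} almost verbatim, inserting the extra bookkeeping forced by the context snapping, and to treat the drift exactly as in \cite{liu20}. Write $\ell=\sup_t\ell_t$ for the drift parameter. First I would fix a round $t$, set $x_0=\xi(x_t)$ and $a_0^*(x_0)\in\arg\max_{a_0\in A_0}\nu(a_0,x_0)$, and split the instantaneous regret $\nu(a^*(x_t),x_t)-\nu(a_t,x_t)$ into three terms: an approximation gap $\nu(a^*(x_t),x_t)-\nu(a_0^*(x_0),x_0)$, the regret $\nu(a_0^*(x_0),x_0)-\nu(a_t,x_0)$ of the UCB subroutine run inside the context cell $x_0$, and a snapping error $\nu(a_t,x_0)-\nu(a_t,x_t)$. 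Using $L$-Lipschitzness of $\nu$ with respect to the product metric $\Phi_X+\Phi_A$ and the $\psi$-covering property of $A_0\times X_0$, I would bound the snapping error by $L\,\Phi_X(x_t,x_0)\le L\psi$ and the approximation gap by $2L\psi$ (comparing $a^*(x_t)$ with its nearest point in $A_0$ and $x_t$ with $x_0$), so that together these two terms contribute at most $O(L\psi T)$ to $\mathbb{E}[\mathcal{R}_T]$.

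The heart of the argument is the UCB-subroutine term together with the compensation, where the drift must be controlled. The key step I would prove first is the deterministic inequality $0\le\kappa_t=\hat\nu_t(g_t,x_0)-\hat\nu_t(a_t,x_0)\le\sqrt{2\log t/N_t(a_t,x_0)}$, which holds because $a_t$ maximizes the UCB index over $A_0$ while $g_t$ maximizes only $\hat\nu_t(\cdot,x_0)$; crucially it is a purely algebraic statement about $\hat\nu_t$, hence insensitive to the drift. This already yields the compensation bound: summing over $t$, grouping by the $(a_0,x_0)$-cell that is pulled, bounding $\sum_{i\le m}i^{-1/2}=O(\sqrt m)$ within each cell, and applying Cauchy--Schwarz with the fact that the cell counts sum to $T$ gives $\mathbb{E}[\mathcal{C}_T]=\tilde{O}\bigl(\sqrt{|A_0\times X_0|\,T}\bigr)=\tilde{O}(\sqrt{T/\psi^d})$ since $|A_0\times X_0|=O(1/\psi^d)$. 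I would then reuse the bound on $\kappa_s$ together with $0\le\gamma_s(\kappa_s)\le\ell\,\kappa_s$ from Assumption~\ref{drift-assm} to show that the accumulated drift contaminating $\hat\nu_t(a_0,x_0)$ after $j=N_t(a_0,x_0)$ pulls of the cell is at most $\tfrac1j\sum_{i\le j}\ell\sqrt{2\log T/i}=O(\ell\sqrt{\log T/j})$, i.e.\ of the same order as the Hoeffding confidence radius. Because the drift moves estimates only upward, optimism for $a_0^*(x_0)$ is preserved on the usual high-probability event, and the standard UCB argument then goes through with the confidence radius inflated by a factor $1+O(\ell)$: a suboptimal cell-$x_0$ arm with gap $\Delta(a_0,x_0)$ is pulled $O\bigl((1+\ell)^2\log T/\Delta(a_0,x_0)^2\bigr)$ times up to an $O(1)$ term for the rare failure event.

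To finish, I would convert pull counts into regret by the usual gap-free splitting of arms into near- and far-optimal sets, obtaining $O\bigl((1+\ell)\sqrt{|A_0|\,N_T(x_0)\log T}\bigr)$ for the cell-$x_0$ contribution to the UCB-subroutine term, and sum over $x_0\in X_0$ using Cauchy--Schwarz together with $\sum_{x_0}N_T(x_0)=T$ to get $O\bigl((1+\ell)\sqrt{|A_0|\,|X_0|\,T\log T}\bigr)=\tilde{O}(\sqrt{T/\psi^d})$. Combining everything gives $\mathbb{E}[\mathcal{R}_T]=O(L\psi T)+\tilde{O}(\sqrt{T/\psi^d})$ and $\mathbb{E}[\mathcal{C}_T]=\tilde{O}(\sqrt{T/\psi^d})$; balancing $L\psi T$ against $\sqrt{T/\psi^d}$ forces $\psi^{d+2}\asymp(L^2T)^{-1}$ up to logarithmic factors, i.e.\ the prescribed $\psi=O\bigl(T^{-1/(d+2)}L^{-2/(d+2)}(\log T)^{1/(d+2)}\bigr)$, and then both $L\psi T$ and $\sqrt{T/\psi^d}$ equal $L^{d/(d+2)}T^{(d+1)/(d+2)}$ up to $\mathrm{polylog}(T)$ factors, which is the claimed bound. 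I expect the main obstacle to be the apparent circularity between the compensation (which creates the drift) and the empirical estimates (which determine the compensation); the resolution is the deterministic inequality $\kappa_t\le\sqrt{2\log t/N_t(a_t,x_0)}$, which breaks the recursion and simultaneously shows the drift bias never exceeds the order of the confidence radius, so that it is absorbed into the $(1+\ell)$ constants of the UCB analysis rather than changing the rate.
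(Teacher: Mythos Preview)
Your proposal is correct and, for the regret, follows the paper's proof essentially verbatim: both decompose into an $O(L\psi T)$ discretization term plus a per-context UCB-with-drift regret lifted from Theorem~\ref{thm-reg-ucb}, aggregate over $x_0\in X_0$ via Cauchy--Schwarz (the paper writes it as Jensen), and balance $\psi$. Your three-term per-round split is slightly more explicit than the paper's two-term version about the snap-back $\nu(a_t,x_0)-\nu(a_t,x_t)$, but that is cosmetic.

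The one substantive difference is the compensation bound. The paper reuses Equation~(\ref{kappa-ucb-ub}) from the proof of Theorem~\ref{thm-reg-ucb}: it invokes the instantaneous-regret bound together with Lipschitzness to lower-bound $N_t(a_t,x_0)\gtrsim(\ell+1)^2\log T/(L\psi)^2$, whence $\kappa_t\le L\psi/(\sqrt 2(\ell+1))$ for every round and $\mathbb E[\mathcal C_T]\le TL\psi/(\sqrt 2(\ell+1))$. You instead take your deterministic inequality $\kappa_t\le\sqrt{2\log t/N_t(a_t,x_0)}$, sum it over cells, and apply Cauchy--Schwarz to get $\tilde O(\sqrt{|A_0\times X_0|\,T})$. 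Both expressions coincide with $\tilde O(L^{d/(d+2)}T^{(d+1)/(d+2)})$ once the optimal $\psi$ is substituted, so the final statement is unaffected. Your route is more self-contained, since it does not pass through a lower bound on pull counts; the paper's route gives a cleaner per-round constant bound on $\kappa_t$ but leans on that auxiliary step.
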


%%%%%%%%%%%%%%%%%%%%%%%%%%%%%%%%%%%%%%%%%%%%%%%%%%%%%%%%%%%%%%%%%%%%%%%%%%%%%%%%%
\section{Numerical Results} \label{app:num_res} 

We evaluate our proposed incentivized exploration algorithms in infinite-armed bandit settings with reward drift, focusing on two regimes: \emph{stochastic rewards} (Algorithm~\ref{meta-inc-uniform}) and \emph{stochastic rewards with contextual information} (Algorithm~\ref{meta-inc-contextual}). In both cases, the arm space is modeled as a compact metric space \( A \subseteq [0,1]^d \) with covering dimension \( d \in \{1,2,3\} \), and the reward function \(\mu(a)\) is Lipschitz continuous with constant \(L \in \{d_1^c, d_2^c, d_3^c\}\). We simulate 10 independent trials over a time horizon \(T = 20{,}000\). At each round, the agent selects arms greedily based on empirical rewards, while the principal chooses arms using a UCB-based policy and offers compensation to align incentives.

The observed reward at time \(t\) is modeled as \( r_t = \rho_t(a_t) + \gamma_t(\kappa_t) \), where \(\rho_t(a_t)\) is the stochastic payoff and \(\gamma_t(\kappa_t) = \ell_t \cdot \kappa_t\) captures the reward drift induced by compensation. Specifically, we sample \(\rho_t(a_t) \sim \mathcal{N}(\mu(a_t), 0.05)\), where the mean reward function is chosen to be linear: \(\mu(a) = L \cdot \sum_{i=1}^d a_i\). This implies optimal rewards \(\mu^* = L, 2L, 3L\) for \(d = 1,2,3\), respectively. The drift factor \(\ell_t\) is drawn uniformly in \([0.45, 0.55]\) to simulate mild variability in agent sensitivity, corresponding to a drift function \(\gamma(\kappa_t) \approx \kappa_t / 2\). In contextual experiments, the context \(x_t \in [0,1]^d\) is drawn uniformly at each round, and rewards are generated via a joint Lipschitz function \(\nu(a, x)\) with identical noise and drift models. To construct \(\nu(a, x)\), we use a separable linear function of the form \(\nu(a, x) = L \cdot (\sum_i a_i + \sum_j x_j)\), ensuring Lipschitz continuity with respect to both arms and contexts.

We report average regret and compensation along with 95\% confidence intervals. In the stochastic setting, Algorithm~\ref{meta-inc-uniform} discretizes the arm space via a theoretically guided \(\psi\)-covering, and both regret and compensation scale as \( \Tilde{O}(T^{(d+1)/(d+2)}) \) (see combined results in Figure ~\ref{fig:main-plots}(a) and ~\ref{fig:main-plots}(b)). For the contextual setting, Algorithm~\ref{meta-inc-contextual} discretizes both arms and contexts, and empirical performance similarly aligns with theoretical bounds (Figure~\ref{fig:main-plots}(c) and ~\ref{fig:main-plots}(d)). In both regimes, regret and compensation increase with the problem dimension but remain within the predicted theoretical upper bounds, indicated by the dotted lines.

\begin{figure}[h]
    \centering
    \includegraphics[width=0.5\textwidth]{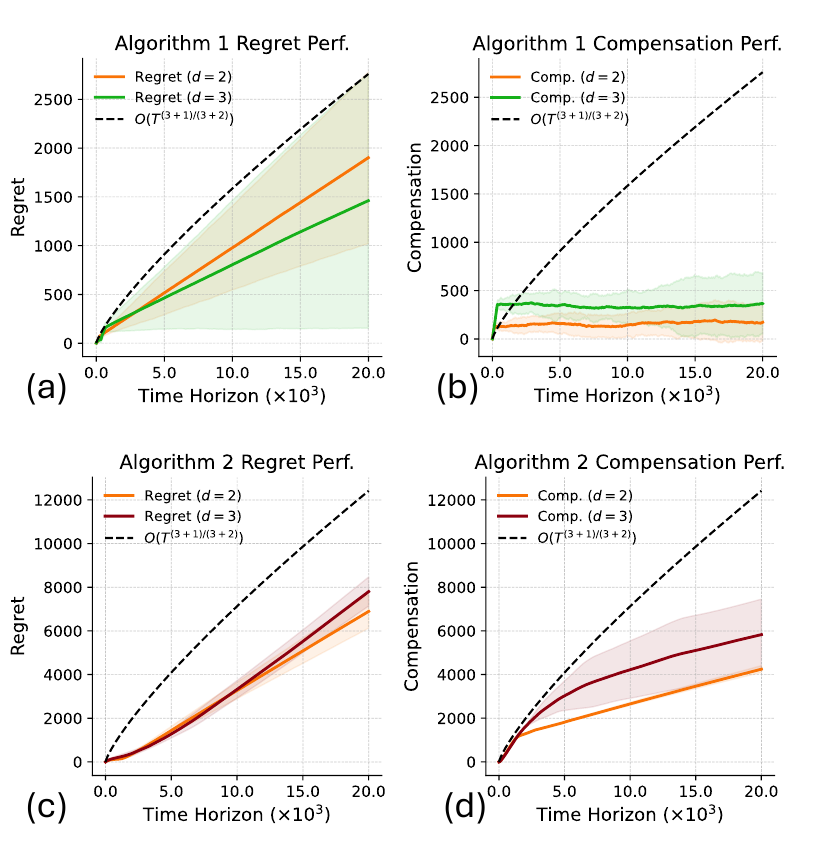}
    \caption{Performance of Algorithm~1 ((a), (b)) and Algorithm~2 ((c), (d)) in terms of average regret and compensation over a horizon of 20{,}000. The dotted lines indicate the theoretical upper bound corresponding to the highest value of $d$.}
    \label{fig:main-plots}
\end{figure}

To further illustrate the importance of setting the discretization parameter \(\psi\) optimally, we evaluate the performance of Algorithm~\ref{meta-inc-uniform} under suboptimal choices of \(\psi\) for various dimensions. As shown in Table~\ref{suboptimal-psi}, deviations from the theoretically optimal \(\psi\) (either too small or too large) can significantly degrade performance. A small \(\psi\) results in an excessively large discretized set \(A_0\), which increases computational burden and sharply inflates compensation due to over-exploration. Conversely, a large \(\psi\) leads to under-coverage of the action space and elevated regret. These results highlight the sensitivity of the algorithm to discretization granularity and empirically validate our choice of scaling \(\psi = O\left(T^{-1/(d+2)} L^{-2/(d+2)} (\log T)^{1/(d+2)}\right)\), where the constant factor was selected through a light hyperparameter search.

\begin{table}
  \caption{Impact of suboptimal \(\psi\) values on the discretized arm set size \(|A_0|\), regret, and compensation. Arrows indicate under- or over-estimation of \(\psi\).}

  \label{suboptimal-psi}
  \centering
  \begin{tabular}{lllll}
    \toprule
    % \multicolumn{2}{c}{Part}                   \\
    % \cmidrule(r){1-2}
    $d$     & $\psi$     & $|A_0|$ & Regret & Comp. \\
    \midrule
    1 & 0.061 (opt) & 17 & 222.95 & 44.60 \\
    1 & 0.00001 ($\downarrow$) & 100000 & 6662.64 & 3821.73 \\
    1 & 0.35 ($\uparrow$) & 3 & 2773.73 & 51.78 \\
    \midrule
    2 & 0.123 (opt) & 81 & 184.25 & 106.27 \\
    2 & 0.005 ($\downarrow$) & 40000 & 1344.90 & 9911.4 \\
    2 & 0.35 ($\uparrow$) & 9 & 2769.69 & 56.64 \\
    \midrule
    3 & 0.187 (opt) & 216 & 659.33 & 240.29 \\
    3 & 0.02 ($\downarrow$) & 125000 & 1173.66 & 9790.02 \\
    3 & 0.35 ($\uparrow$) & 27 & 2775.99 & 72.81 \\
    \bottomrule
  \end{tabular}
\end{table}

%%%%%%%%%%%%%%%%%%%%%%%%%%%%%%%%%%%%%%%%%%%%%%%%%%%%%%%%%%%%%%%%%%%%%%%%%%%%%%%%
\section{Lower Bounds and Adaptiveness}

So far, our algorithms have adopted the naive strategy of uniformly discretizing the space of arms $A$ to obtain a ``good" approximation of the underlying arm space. Another well-known approach towards discretizing the arm space involves adaptively focusing on areas of the metric space that contain arms close to optimal (called the `zooming' algorithm) (\cite{Kleinberg-metric}). Once the arms that are under consideration are chosen adaptively, the algorithm uses a UCB-like strategy to pick an arm to play, i.e., pick the arm with the highest upper confidence bound. Thus, we focus on comparing our UCB-based algorithm with zooming since they are the most similar. 

The zooming algorithm for arbitrary metric spaces can achieve $\tilde{O}(T^{\frac{d_z+1}{d_z+2}})$ (\cite{Kleinberg-lipschitz}) {regret} where $d_z$ is the \lq zooming dimension\rq. Intuitively, unlike the covering dimension, which aims at covering the entire metric space, the zooming dimension focuses on covering a subset ${\tilde{A}} \subset A$ where {$\tilde{A}$} contains near-optimal arms (we refer the reader to \cite{Kleinberg-metric} for a technical definition). {Obviously}, the zooming dimension can never be larger than the covering dimension, and in the worst case where near-optimal arms are spread out over the metric space, the covering dimension equals the zooming dimension (i.e. $d = d_z$). Since our bounds are over \emph{all} metric spaces, a zooming-based strategy for discretization would get the same regret upper bound as {Theorem} \ref{thm-reg-ucb} in the worst case and thus, in general, an adaptive discretization strategy does not help improve regret bounds.

Note that existing regret bounds discussed above apply exclusively to the standard Lipschitz bandit problem without incentivization or reward drift. In contrast, our results incorporate both incentivization and reward drift alongside the Lipschitz assumption. Our findings show that, even with these additional complexities, the carefully chosen incentive structure achieves regret upper bounds matching those from scenarios without incentives - while \emph{simultaneously} maintaining sublinear compensation.

Nevertheless, there remains an important open question: Could adaptive discretization further improve regret guarantees in specific problem instances (beyond worst-case scenarios)? If so, it is also unclear whether these improved regret bounds could still be combined with sublinear compensation costs. We leave the detailed exploration of this potential trade-off to future research.

Finally, there exist lower bounds on regret for the Lipschitz bandit problem: no algorithm can do better than $\tilde{O}(T^{\frac{d_m+1}{d_m+2}})$ where $d_m$ is the Max-Min Covering dimension (\cite{Kleinberg-lipschitz}). Intuitively, the Max-Min covering dimension tries to quantify the isometry between two $\epsilon$-balls in the same space and is less than or equal to the covering dimension in general. In the worst case, it is equal to the covering dimension when we have a homogenous metric space (refer to \cite{Kleinberg-lipschitz} for a technical definition) i.e. $d = d_m$. Therefore, similar to the zooming case, when {an arbitrary} metric space {is} considered, we obtain an upper bound that matches the lower bound. Thus, we need instance-specific, structural assumptions on the metric space to prove improved bounds via uniform discretization or a more sophisticated algorithm.

%%%%%%%%%%%%%%%%%%%%%%%%%%%%%%%%%%%%%%%%%%%%%%%%%%%%%%%%%%%%%%%%%%%%%%%%%%%%%%%%
\section{Conclusion}  
In this paper, we introduced incentivized exploration algorithms for infinite-armed multi-armed bandit problems with reward drift, addressing critical real-world scenarios where decision-makers must encourage exploration among myopic agents. By employing a uniform discretization approach and carefully designed incentive structures, our algorithms achieve simultaneous sublinear regret and compensation guarantees, both scaling as  in terms of the covering dimension . We also successfully extended our framework to contextual bandits, yielding analogous theoretical and empirical performance. These results highlight the feasibility and effectiveness of incentivized exploration in high-dimensional continuous settings. Future directions include exploring adaptive discretization strategies to potentially improve instance-specific regret bounds and further generalizing incentive mechanisms for dynamic and adversarial environments.

%%%%%%%%%%%%%%%%%%%%%%%%%%%%%%%%%%%%%%%%%%%%%%%%%%%%%%%%%%%%%%%%%%%%%%%%%%%%%%%%

\bibliographystyle{IEEEtran}
\bibliography{refs}

\section{appendix}
This document is organized as follows. Section \ref{app:back} provides a summary of the notation used throughout the paper, along with additional background on key concepts and formal definitions where necessary. Section \ref{app:applications} outlines real-world applications of our framework. Section \ref{app:proofs} contains detailed proofs for all results stated in the main paper. Finally, Section \ref{app:num_res} offers empirical validation of our algorithms and results, supporting the theoretical findings with numerical experiments.

\section{Background and Notation} \label{app:back}

In this section, we provide additional background to the concepts used in our paper along with their formal definitions as used in prior work for completeness. We begin by summarizing the notation in our paper.

\subsection{Summary of Notation}
A summary of notation used throughout the paper is given in Table \ref{tab:notation}.
\begin{table*}[t]
    \centering
    \renewcommand{\arraystretch}{1}  % Adjust row height for better spacing
    \begin{tabular}{p{3.5cm} p{10cm}}
        \toprule
        \textbf{Symbol} & \textbf{Definition} \\
        \midrule
        
        $A$ & Set of arms. \\

        $X$ & Set of contexts. \\

        $\Phi_A$ & Metric on the arm space. \\

        $\Phi_X$ & Metric on the context space. \\

        $\Phi$ & Product metric on $A \times X$, defined as $\Phi((a, x), (a', x')) = \Phi_A(a, a') + \Phi_X(x, x')$. \\

        $\mu(a)$ & Expected reward of arm $a$. \\

        $\nu(a, x)$ & Expected reward of arm $a$ given context $x$. \\

        $\rho_t(a)$ & Instantaneous reward of arm $a$ at time $t$. \\

        $\rho_t(a, x)$ & Instantaneous reward of arm $a$ given context $x$ at time $t$. \\

        $\gamma_t(\cdot)$ & Drift function at time $t$. \\

        $\kappa_t$ & Compensation at time $t$. \\

        $r_t$ & Observed reward at time $t$. \\

        $\hat{\mu}_t(a)$ & Empirical mean reward of arm $a$ at time $t$. \\

        $\hat{\nu}_t(a, x)$ & Empirical mean reward of arm $a$ given context $x$ at time $t$. \\

        $g_t$ & Greedy arm choice at time $t$. \\

        $a_t$ & Principal’s recommended arm at time $t$. \\

        $\mathcal{R}_T$ & Cumulative regret up to time $T$. \\

        $\mathcal{C}_T$ & Cumulative compensation up to time $T$. \\

        $d$ & Covering dimension of the metric space. \\

        $d_a$ & Covering dimension of the arm space $(A, \Phi_A)$. \\

        $d_x$ & Covering dimension of the context space $(X, \Phi_X)$. \\

        $d_z$ & Zooming dimension. \\

        $\psi$ & Mesh used for uniform discretization. \\

        $N_t(a)$ & Number of pulls of arm $a$ up to time $t$. \\

        $N_t(a, x)$ & Number of times arm $a$ was pulled for context $x$ up to time $t$. \\

        $L$ & Lipschitz constant of the reward function. \\

        $\ell_t$ & Lipschitz constant of the drift function at time $t$. \\

        $\ell$ & Maximum Lipschitz constant of the drift function. \\

        $\tilde{A}$ & Subset of near-optimal arms. \\

        $\mu'(a)$ & Reward function incorporating incentivization. \\

        $\nu'(a, x)$ & Contextual reward function incorporating incentivization. \\

        $\gamma(y)$ & Drift function based on compensation paid. \\

        $\xi(x)$ & Snapping function that maps context $x$ to the nearest point in $X_0$. \\

        $X_0$ & Discretized set of contexts used in uniform discretization. \\

        $A_0$ & Discretized set of arms used in uniform discretization. \\

        $A_0 \times \mathcal{X}_0$ & Discretized arm-context space used in contextual bandits. \\

        \bottomrule
    \end{tabular}
    \caption{Summary of Notation Used in the Paper}
    \label{tab:notation}
\end{table*}

\subsection{Metric Spaces}

A metric space provides a rigorous mathematical framework for quantifying distances, defining neighborhoods, and analyzing convergence in a given set of objects. It plays a fundamental role in numerous mathematical disciplines, including topology, functional analysis, and optimization, and has significant applications in machine learning, online decision-making, and multi-armed bandits (\cite{lai1985asymptotically, Kleinberg-lipschitz, slivkins2011contextual}). 

Formally, a metric space is defined as a pair \((\mathcal{X}, \Phi)\), where \(\mathcal{X}\) is a set, and \(\Phi: \mathcal{X} \times \mathcal{X} \to \mathbb{R}\) is a function known as a \textit{metric} or \textit{distance function}. The function \(\Phi\) assigns a non-negative real value to each pair of elements in \(\mathcal{X}\), which intuitively represents the "distance" between them. To be a valid metric, \(\Phi\) must satisfy the following fundamental axioms (\cite{rudin1964principles, Kleinberg-lipschitz}):

\begin{enumerate}
    \item \textbf{Non-negativity and Identity of Indiscernibility:}  
    The function \(\Phi\) is always non-negative, ensuring that distances are meaningful in a real-valued setting:
    \[
    \Phi(x, y) \geq 0, \quad \forall x, y \in \mathcal{X},
    \]
    with equality if and only if \(x = y\), i.e., the only way for the distance between two elements to be zero is if they are identical:
    \[
    \Phi(x, y) = 0 \iff x = y.
    \]
    
    \item \textbf{Symmetry:}  
    The metric is symmetric, meaning that the distance between two points remains unchanged when their order is reversed:
    \[
    \Phi(x, y) = \Phi(y, x), \quad \forall x, y \in \mathcal{X}.
    \]
    This property ensures that \(\Phi\) is well-defined as a measure of pairwise separation.

    \item \textbf{Triangle Inequality:}  
    The metric satisfies the triangle inequality, which states that for any three points \(x, y, z \in \mathcal{X}\), the direct path between \(x\) and \(z\) is never longer than the sum of the paths via an intermediate point \(y\):
    \[
    \Phi(x, z) \leq \Phi(x, y) + \Phi(y, z), \quad \forall x, y, z \in \mathcal{X}.
    \]
    This property guarantees that the metric behaves consistently with intuitive geometric notions of distance, preventing shortcuts or violations of spatial consistency.
\end{enumerate}

These three axioms collectively ensure that the metric function \(\Phi\) properly encodes the structure of \(\mathcal{X}\) in a way that enables rigorous reasoning about proximity, separation, and continuity. Classical examples of metric spaces include the Euclidean space \(\mathbb{R}^d\) with the \(\ell_p\) norm (such as \(\ell_2\), the standard Euclidean distance), discrete metric spaces, and more general structures like Riemannian manifolds (\cite{munkres2013topology, lattimore_2020}).

\subsection{Covering Numbers} \label{cov-num}

In the study of metric spaces and statistical learning theory, \textit{covering numbers} serve as a fundamental tool for quantifying the complexity of a space (\cite{mendelson2003entropy, vershynin2018high}). Covering numbers provide a measure of how well a given space can be approximated using a finite set of representative points, making them particularly useful in the analysis of function approximation, discretization techniques, and generalization bounds in machine learning (\cite{anthony2009neural, lattimore_2020}).  

Given a metric space \((\mathcal{X}, \Phi)\), the covering number, denoted by \(\mathcal{N}(x, \mathcal{X}, \Phi)\), represents the minimum number of metric balls of radius \(x\) required to cover the entire space \(\mathcal{X}\). The metric \(\Phi\) determines the notion of "distance" in \(\mathcal{X}\) and thereby governs the size of each covering ball. Covering numbers are widely used in fields such as functional analysis, empirical process theory, and bandit learning, particularly in settings involving infinite or high-dimensional spaces (\cite{bubeck2011lipschitz, Kleinberg-lipschitz}).

\paragraph{Formal Definition.}  
For a subset \(S \subseteq \mathcal{X}\) of a metric space \((\mathcal{X}, \Phi)\), the \(x\)-covering number is defined as the smallest number of balls of radius \(x\) needed to cover \(S\):

\begin{align*}
    \mathcal{N}(x, &S, \Phi) = \\
    &\min \left\{ n \in \mathbb{N} : \exists \{x_i\}_{i=1}^{n} \subseteq \mathcal{X}, \quad S \subseteq \bigcup_{i=1}^{n} B(x_i, x) \right\},
\end{align*}

where \(B(x_i, x)\) denotes the closed metric ball centered at \(x_i\) with radius \(x\), i.e.,  

\begin{equation}
    B(x_i, x) = \{ y \in \mathcal{X} \mid \Phi(y, x_i) \leq x \}.
\end{equation}

Intuitively, the covering number quantifies how many discrete points are necessary to approximate the entire space within a given resolution \(x\). Spaces with small covering numbers at fine resolutions exhibit lower complexity, whereas high-dimensional or irregular spaces tend to have larger covering numbers.

\paragraph{Scaling Behavior and Dimensional Dependence.}  
The growth rate of covering numbers is closely tied to the intrinsic dimension of the metric space. In a \(d\)-dimensional Euclidean space \((\mathbb{R}^d, \ell_2)\), it is well known that the covering number scales polynomially as:

\begin{equation} 
    \mathcal{N}(x, \mathbb{R}^d, \ell_2) = O(x^{-d}).
\end{equation}

More generally, for a compact metric space \((\mathcal{X}, \Phi)\) with covering dimension \(d\), the covering number satisfies:

\begin{equation} \label{eq:disc}
    \mathcal{N}(x, \mathcal{X}, \Phi) = O(x^{-d}),
\end{equation}

which establishes a fundamental link between metric entropy and the geometric structure of \(\mathcal{X}\) (\cite{vershynin2018high, Kleinberg-lipschitz}).

\subsection{\(\psi\)-Covers of Metric Spaces} \label{psi-cov-back}

A \(\psi\)-cover provides a structured means of discretizing a continuous metric space while controlling the resolution of the approximation. Such covers are widely used in topology, metric geometry, and statistical learning theory to quantify complexity and facilitate numerical approximations (\cite{mendelson2003entropy, vershynin2018high}). In the context of multi-armed bandits and online learning, \(\psi\)-covers play a critical role in uniform discretization schemes, ensuring that continuous spaces can be efficiently explored with a finite representative set (\cite{Kleinberg-metric, slivkins2011contextual}).

\paragraph{Formal Definition.}  
Let \((X, d)\) be a metric space, and let \(\psi > 0\). A \(\psi\)-cover of \(X\) is a collection of subsets \(\{U_i\}_{i \in I}\) such that:

\begin{itemize}
    \item \(X = \bigcup_{i \in I} U_i\), ensuring that every point in \(X\) is contained in at least one subset \(U_i\).
    \item Each subset \(U_i\) has diameter at most \(\psi\), i.e.,
    \[
    \text{diam}(U_i) = \sup\{d(x, y) : x, y \in U_i\} \leq \psi, \quad \forall i \in I.
    \]
\end{itemize}

The notion of a \(\psi\)-cover is closely related to covering numbers \(\mathcal{N}(\psi, X, d)\), which quantify the minimal number of such subsets required to cover \(X\) (\cite{anthony2009neural, lattimore_2020}). When applied in algorithmic settings, a \(\psi\)-cover serves as the basis for constructing finite approximations of infinite or high-dimensional spaces.

\paragraph{Application in Uniform Discretization.}  
In our work, all uniform discretization algorithms (Sections (\ref{prob-form}) and (\ref{sec:uniform})) utilize a \(\psi\)-mesh, which is constructed by selecting a representative point from each subset in the covering. This process effectively reduces a continuous optimization problem to a discrete one while preserving the underlying metric structure. The quality of such a discretization directly impacts the trade-off between approximation error and computational efficiency in multi-armed bandits and reinforcement learning (\cite{bubeck2011lipschitz, Kleinberg-lipschitz}).

\subsection{Technical Lemmas}

\begin{lemma}[Hoeffding's Inequality] \label{hoeff}
Let \(X_1, X_2, \dots, X_n\) be independent random variables satisfying \(X_i \in [a_i, b_i]\) almost surely. Define the sample mean as  
\[
\bar{X} = \frac{1}{n} \sum_{i=1}^{n} X_i,
\]
with expectation \(\mu = \mathbb{E}[\bar{X}]\). Then, for any \(t > 0\), the following concentration bound holds:
\begin{equation}
\Pr(|\bar{X} - \mu| \geq t) \leq 2\exp\left(-\frac{2n^2t^2}{\sum_{i=1}^{n} (b_i - a_i)^2}\right).
\end{equation}
\end{lemma}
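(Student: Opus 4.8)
The plan is to prove the concentration bound by the classical exponential (Chernoff) method together with a bound on the moment generating function of a bounded, centered random variable. First I would fix $s>0$, center the variables as $Y_i = X_i - \mathbb{E}[X_i]$, and apply Markov's inequality to the nonnegative random variable $\exp\!\big(s\sum_{i=1}^n Y_i\big)$:
\[
\Pr\!\Big(\bar X - \mu \ge t\Big) = \Pr\!\Big(\textstyle\sum_{i=1}^n Y_i \ge nt\Big) \le e^{-snt}\,\mathbb{E}\!\Big[e^{s\sum_{i=1}^n Y_i}\Big] = e^{-snt}\prod_{i=1}^n \mathbb{E}\!\big[e^{sY_i}\big],
\]
where the last equality uses independence. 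This reduces the problem to controlling each single-variable MGF $\mathbb{E}[e^{sY_i}]$.

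The key lemma (Hoeffding's lemma) states that if $Y \in [a,b]$ almost surely and $\mathbb{E}[Y] = 0$, then $\mathbb{E}[e^{sY}] \le \exp\!\big(s^2(b-a)^2/8\big)$ for all $s \in \mathbb{R}$. I would prove it by studying the cumulant function $\psi(s) = \log \mathbb{E}[e^{sY}]$: one checks $\psi(0) = 0$ and $\psi'(0) = \mathbb{E}[Y] = 0$, while $\psi''(s)$ equals the variance of $Y$ under the exponentially tilted law (density proportional to $e^{sY}$), which is still supported on $[a,b]$ and hence, by Popoviciu's inequality, has variance at most $(b-a)^2/4$. A second-order Taylor expansion of $\psi$ about $0$ then yields $\psi(s) \le s^2(b-a)^2/8$. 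Applying this with $Y = Y_i \in [a_i - \mathbb{E}X_i,\, b_i - \mathbb{E}X_i]$, an interval of length $b_i - a_i$, gives $\mathbb{E}[e^{sY_i}] \le \exp\!\big(s^2(b_i-a_i)^2/8\big)$.

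Substituting back,
\[
\Pr\!\Big(\bar X - \mu \ge t\Big) \le \exp\!\Big(-snt + \tfrac{s^2}{8}\textstyle\sum_{i=1}^n (b_i-a_i)^2\Big),
\]
and minimizing this quadratic in $s$ with the choice $s = 4nt / \sum_{i=1}^n (b_i-a_i)^2$ gives the one-sided bound $\exp\!\big(-2n^2t^2 / \sum_{i=1}^n (b_i-a_i)^2\big)$. Running the identical argument with $-Y_i$ in place of $Y_i$ controls the lower tail $\Pr(\bar X - \mu \le -t)$ by the same quantity, and a union bound over the two one-sided events produces the factor of $2$ in the statement. I expect the only nonroutine step to be Hoeffding's lemma — in particular, the clean bound $\psi''(s) \le (b-a)^2/4$ via the variance-of-a-bounded-variable (Popoviciu) argument; the Chernoff wrapper and the final optimization over $s$ are mechanical.
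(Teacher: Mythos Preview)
Your proof is correct and follows the standard Chernoff--Hoeffding argument. The paper does not actually supply a proof of this lemma: it is stated in the appendix as a background technical result and used without justification, so there is no ``paper's proof'' to compare against --- your write-up simply fills in what the authors leave to the reader.
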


Hoeffding's inequality provides an exponential concentration bound on the deviation of the empirical mean from its expectation. In our analysis, we employ a specialized form of this result to bound deviations in empirical reward estimates.

\begin{lemma}[Modified Hoeffding's Inequality] \label{hoeff-mod}
Suppose that rewards are independent and identically distributed (i.i.d.) in the interval \([0,1]\). Let \(\tilde{\mu}(a)\) denote the empirical mean reward for arm \(a\) after \(N\) observations. Define the confidence radius:
\begin{equation}
r(a) = \sqrt{\frac{2 \log T}{N}}.
\end{equation}
Then, with probability at least \(1 - 2/T^{4}\), the following concentration bound holds:
\begin{equation}
|\tilde{\mu}(a) - \mu(a)| \leq r(a).
\end{equation}
\end{lemma}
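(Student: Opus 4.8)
The plan is to obtain the claim as a direct specialization of Hoeffding's inequality (Lemma~\ref{hoeff}). First I would instantiate Lemma~\ref{hoeff} with the $N$ i.i.d.\ reward samples of arm $a$ playing the role of $X_1,\dots,X_n$: here $n = N$, each sample lies in $[0,1]$ so we may take $a_i = 0$ and $b_i = 1$, the sample mean $\bar X$ is exactly $\tilde{\mu}(a)$, and $\mathbb{E}[\bar X] = \mu(a)$ since the samples are i.i.d.\ with mean $\mu(a)$. Consequently $\sum_{i=1}^{N}(b_i - a_i)^2 = N$, and Lemma~\ref{hoeff} gives
\[
\Pr\bigl(|\tilde{\mu}(a) - \mu(a)| \geq t\bigr) \;\leq\; 2\exp\!\left(-\frac{2N^2 t^2}{N}\right) \;=\; 2\exp\bigl(-2Nt^2\bigr), \qquad \forall\, t > 0 .
\]

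Next I would substitute the specific value $t = r(a) = \sqrt{2\log T / N}$. Then $2Nt^2 = 2N\cdot \tfrac{2\log T}{N} = 4\log T$, so the right-hand side collapses to $2\exp(-4\log T) = 2T^{-4}$. Passing to the complement event yields $\Pr\bigl(|\tilde{\mu}(a) - \mu(a)| \leq r(a)\bigr) \geq 1 - 2/T^4$, which is precisely the assertion of Lemma~\ref{hoeff-mod}.

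The arithmetic here is entirely routine; the only point needing care — and the one I would flag as the real obstacle when the lemma is \emph{used} inside Algorithms~\ref{meta-inc-uniform} and~\ref{meta-inc-contextual} — is that the relevant count $N_t(a)$ (or $N_t(a_0,x_0)$) is a random, data-dependent quantity rather than the fixed integer $N$ assumed in the statement, so one cannot naively apply the bound "at the realized $N$". The clean remedy is to prove the displayed concentration bound simultaneously for every fixed $N \in \{1,\dots,T\}$ and then take a union bound over those at most $T$ values; this degrades the failure probability only from $2/T^4$ to $2/T^3$, which remains summable over the horizon and therefore leaves all downstream regret and compensation bounds unaffected. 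I would prove the lemma for fixed $N$ exactly as above and invoke this union-bound step wherever the result is applied to the random pull counts.
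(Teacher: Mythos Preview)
Your proposal is correct and matches the paper's (implicit) approach: the paper states Lemma~\ref{hoeff-mod} as a technical specialization of Lemma~\ref{hoeff} without spelling out the derivation, and your instantiation with $n=N$, $[a_i,b_i]=[0,1]$, and $t=\sqrt{2\log T/N}$ is exactly the intended one-line calculation. Your additional remark about the random $N_t(a)$ and the union bound over $N\in\{1,\dots,T\}$ is also consistent with how the paper applies the lemma in the clean-event analysis (see the union bound leading to~(\ref{clean-ev-prob})).
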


This result ensures that with high probability, the empirical mean estimate remains within a confidence interval of width \(r(a)\) around the true mean reward \(\mu(a)\). This bound is fundamental in our regret analysis, particularly in controlling the spread of empirical estimates in bandit algorithms.

\section{Applications} \label{app:applications}

In this section, we outline some real-world applications of our framework.

\subsection{E-commerce Product Recommendations}

In e-commerce platforms, the arm space $A$ represents the product catalog, with metric $\Phi$ capturing product similarity. The expected reward $\mu(a)$ corresponds to customer satisfaction or revenue. Customers act as agents, while the platform is the principal. The incentive mechanism $\kappa_t$ represents discounts or coupons. The drift function $\gamma_t(\kappa_t)$ models how incentives inflate customer ratings, reflecting the tendency of incentivized customers to provide more positive feedback. This drift challenges the platform's ability to accurately assess product quality over time.

\subsection{Content Recommendation in Streaming Services}

For streaming platforms, arms represent content items, and $\Phi$ encodes content similarity. The reward $\mu(a)$ measures user engagement. Viewers are agents, and the streaming service is the principal. Incentives $\kappa_t$ could be promotional offers for premium content. The drift $\gamma_t(\kappa_t)$ models how these incentives temporarily boost viewer ratings or engagement metrics, potentially skewing the platform's assessment of content quality. This drift reflects the challenge of distinguishing between genuinely engaging content and content that appears popular due to promotional effects.

\subsection{Dynamic Pricing in Ride-sharing Platforms}

In ride-sharing applications, arms represent pricing strategies, while $\Phi$ captures pricing model similarity. The reward $\mu(a)$ balances customer satisfaction and driver earnings. Riders and drivers are agents, with the platform as principal. Incentives $\kappa_t$ could be price adjustments or bonuses. The drift $\gamma_t(\kappa_t)$ models how these incentives temporarily alter user behavior, such as increased ride acceptance rates or higher ratings, which may not reflect long-term satisfaction. This drift complicates the platform's ability to determine optimal pricing strategies.

\subsection{Adaptive Clinical Trials}

In medical research, arms represent treatments, and $\Phi$ measures treatment similarity. The reward $\mu(a)$ quantifies treatment efficacy. Patients are agents, with researchers as the principal. Incentives $\kappa_t$ could be additional medical support. The drift $\gamma_t(\kappa_t)$ models how these incentives might influence patient-reported outcomes or adherence, potentially inflating the perceived efficacy of certain treatments. This drift reflects the challenge of separating true treatment effects from the effects of additional care or attention given to participants in certain trial arms.

\subsection{Personalized Education Platforms}

For online learning platforms, arms represent educational content, while $\Phi$ captures resource similarity. The reward $\mu(a)$ measures learning outcomes. Students are agents, and the platform is the principal. Incentives $\kappa_t$ could be bonus credits or certificates. The drift $\gamma_t(\kappa_t)$ models how these incentives might temporarily boost student engagement or performance metrics, which may not accurately reflect long-term learning outcomes. This drift challenges the platform's ability to identify truly effective educational content versus content that appears successful due to short-term motivational effects of the incentives.

\section{Proofs} \label{app:proofs}
In this section, we present proofs of all our results.

\subsection{Analysis of Theorem 1 (Incentivized Exploration with Stochastic Rewards)}
Analogous to the previous cases, we begin by analyzing the expected regret. Recall from Section 3 of the main paper that the expected regret of a stochastic bandit algorithm employing an incentivized exploration scheme over the infinite arm set \( A \) (i.e., Algorithm (\ref{meta-inc-uniform})) is given by  
\begin{align*}
    \mathbb{E}\left[\mathcal{R}_T(A)\right] &= T\cdot \mu_{A}^* - \mathbb{E}\left[\sum_{t=0}^T \rho_t(a_t)\right] \\
    &=T\left(\mu_{A}^* -\mu_{A_0}^*\right) + T\cdot \mu_{A_0}^* - \mathbb{E}\left[\sum_{t=0}^T \rho_t(a_t)\right] \\
    &= T\cdot \delta_{A_0} + \mathbb{E}\left[\mathcal{R}_T(A_0)\right],
\end{align*}  
where \( \delta_{A_0} = \mu_{A}^* - \mu_{A_0}^* \) represents the approximation (or the discretization) error, and \( \mathcal{R}_T(A_0) \) denotes the regret of the algorithm on the approximate arm set \( A_0 \) over the time horizon \( T \). From Equation (\ref{lips}) in Section \ref{prob-form}, it follows that \( \delta_{A_0} \leq L\psi \). Thus, the analysis reduces to bounding \( \mathbb{E}[\mathcal{R}_T(A_0)] \).

Let \( a_{\tau} \) be the arm played by the algorithm at time \( \tau \). The corresponding reward drift associated with this arm is given by \( \gamma_{\tau}(\kappa_{\tau}) \). Let \( N_t(a) \) denote the number of times arm \( a \) has been selected up to time \( t \). Additionally, define \( \ell = \max_t \ell_t \), where \( \ell_t \) is the Lipschitz constant of the unknown drift function, as stated in Assumption \ref{drift-assm}. Following \cite{liu20}, the total reward drift accumulated by arm \( a \) over \( t \) time steps is given by  
\begin{align} \label{Gamma}
    \Gamma_t(a) = \sum_{\tau=1}^t \gamma_{\tau}(\kappa_{\tau}) \cdot \mathbf{1} \left(a_{\tau} = a\right) \leq 2\ell \sqrt{2N_t(a) \log T}.
\end{align}  

Next, we analyze the expected regret \( \mathcal{R}_T(A_0) \). To do so, we employ the \textit{clean event} analysis technique (see, e.g., \cite{slivkins2021introduction}). The key idea is to ensure that the empirical estimates of the rewards remain sufficiently close to their true expectations with high probability.  

Let \( \tilde{\mu}_t(a) \) denote the drift-free empirical mean reward of arm \( a \) at time \( t \). That is, \( \tilde{\mu}_t(a) \) can be viewed as the sample mean of i.i.d. rewards drawn from the underlying distribution \( \mathcal{D}(a) \), independent of any drift effects. Our goal is to ensure that \( \tilde{\mu}_t(a) \) remains close to the true expected reward \( \mu(a) \) with high probability.  

By Hoeffding's inequality (see (\ref{hoeff-mod})), we obtain the following probabilistic guarantee for any fixed arm \( a \) and time \( t \):
\begin{align*}
    \mathbb{P} \left[ \left| \tilde{\mu}_t(a) - \mu(a) \right| \leq \sqrt{\frac{2 \log T}{N_t(a)}} \right] \geq 1 - \frac{2}{T^4}.
\end{align*}

We now define the clean event \( \mathcal{E}_{\text{clean}} \) as the event in which the above concentration bound holds for all arms \( a \in A_0 \) and all time steps \( t \leq T \), i.e.,
\begin{align*}
    \mathcal{E}_{\text{clean}} := \left\{ \forall t \leq T, \forall a \in A_0, \left| \tilde{\mu}_t(a) - \mu(a) \right| \leq \sqrt{\frac{2 \log T}{N_t(a)}} \right\}.
\end{align*}
Applying the union bound over all arms and time steps, we obtain
\begin{align} \label{clean-ev-prob}
    \mathbb{P}(\mathcal{E}_{\text{clean}}) \geq 1 - \frac{2 |A_0|}{T^3}.
\end{align}

At any time \( t \), let \( a_t \) be the arm selected by Algorithm (\ref{meta-inc-uniform}). To bound the worst-case regret under the clean event, we observe that the algorithm selects \( a_t \) only if its upper confidence bound (UCB) is at least as large as that of the optimal arm \( a^* \). That is, the selection criterion implies  
\begin{align}
\tilde{\mu}_t(a_t) + \frac{\Gamma_t(a_t)}{N_t(a_t)} 
&+ \sqrt{\frac{2 \log T}{N_t(a_t)}} \nonumber \\[0.5em]
&\geq \tilde{\mu}_t(a^*) + \frac{\Gamma_t(a^*)}{N_t(a^*)} 
+ \sqrt{\frac{2 \log T}{N_t(a^*)}}.
\label{ucb-select}
\end{align}

\textbf{Step 1}: We will expand the individual terms using the clean event. We know that under the clean event \( \mathcal{E}_{\text{clean}} \), for all arms \( a \in A_0 \) and all time steps \( t \leq T \), we have 
\begin{align*}
    \left| \tilde{\mu}_t(a) - \mu(a) \right| \leq \sqrt{\frac{2 \log T}{N_t(a)}}.
\end{align*}
Applying this bound to both \( a_t \) and \( a^* \), we obtain  
\begin{align*}
    \mu(a_t) &\geq \tilde{\mu}_t(a_t) - \sqrt{\frac{2 \log T}{N_t(a_t)}} \\
     &\text{and} \\
    \mu(a^*) &\leq \tilde{\mu}_t(a^*) + \sqrt{\frac{2 \log T}{N_t(a^*)}}.
\end{align*}

\textbf{Step 2:} The instantaneous regret incurred by selecting \( a_t \) is defined as \(\theta(a_t) = \mu(a^*) - \mu(a_t)\). Substituting the clean event bounds on \( \mu(a^*) \) and \( \mu(a_t) \), we get  
\begin{align*}
    \mu(a^*) - \mu(a_t)
    &\leq \tilde{\mu}_t(a^*) + \sqrt{\frac{2 \log T}{N_t(a^*)}} - \left( \tilde{\mu}_t(a_t) - \sqrt{\frac{2 \log T}{N_t(a_t)}} \right) \\
    &= \left( \tilde{\mu}_t(a^*) - \tilde{\mu}_t(a_t) \right) + \sqrt{\frac{2 \log T}{N_t(a_t)}} + \sqrt{\frac{2 \log T}{N_t(a^*)}}.
\end{align*}
Rearranging the UCB selection rule (\ref{ucb-select}), we obtain  
\begin{align*}
    \tilde{\mu}_t(a^*) - \tilde{\mu}_t(a_t)
    &\leq \frac{\Gamma_t(a_t)}{N_t(a_t)} - \frac{\Gamma_t(a^*)}{N_t(a^*)} + \sqrt{\frac{2 \log T}{N_t(a_t)}} - \sqrt{\frac{2 \log T}{N_t(a^*)}} \\
    &\leq \frac{\Gamma_t(a_t)}{N_t(a_t)} + \sqrt{\frac{2 \log T}{N_t(a_t)}}.
\end{align*}

Using (\ref{Gamma}), we get the bound \( \Gamma_t(a) / N_t(a) \leq 2\ell \sqrt{2 \log T / N_t(a)} \), valid under the clean event assumption, we conclude that
\begin{align}\label{inst-regret-ucb}
    \theta(a_t) = \mu(a^*) - \mu(a_t) \leq 2(\ell + 1) \sqrt{\frac{2\log T}{N_t(a_t)}}.
\end{align}

Thus, the instantaneous regret for selecting any suboptimal arm \( a_t \in A_0 \setminus \{a^*\} \) is bounded by a term that decreases as \( N_t(a_t) \) increases, ensuring improved estimates over time.

Under the clean event and (\ref{inst-regret-ucb}), the total regret \( \mathcal{R}_T(A_0) \) after \( T \) time steps is given by  
\begin{align*}
    \mathcal{R}_T(A_0) = \sum_{a \in A_0} \theta(a)\cdot N_T(a) \leq 2(\ell + 1) \sqrt{2\log T} \sum_{a \in A_0} \sqrt{N_T(a)}.
\end{align*}
Since \( \sqrt{x} \) is a concave function for \( x \in \mathbb{R}_{\geq 0} \) and the total number of arm pulls satisfies \(\sum_{a \in A_0} N_T(a) = T\), we apply Jensen's inequality to obtain
\begin{align*}
    \frac{1}{|A_0|}\sum_{a \in A_0} \sqrt{N_T(a)} \leq \sqrt{\frac{1}{|A_0|} \sum_{a \in A_0} N_T(a)} = \sqrt{\frac{T}{|A_0|}}.
\end{align*}

Multiplying both sides by \( |A_0| \), we get  
\begin{align*}
    \sum_{a \in A_0} \sqrt{N_T(a)} \leq |A_0| \sqrt{\frac{T}{|A_0|}} = \sqrt{T |A_0|}.
\end{align*}
Substituting this bound into the regret expression, we obtain  
\begin{align}\label{reg-ucb}
    \mathcal{R}_T(A_0) \leq 2\sqrt{2}(\ell + 1) \sqrt{T |A_0| \log T}.
\end{align}
We now analyze the expected regret \( \mathbb{E}[\mathcal{R}_T(A_0)] \) by considering the probability of the clean event and its complement. Since the probability of the clean event, derived earlier using the union bound (i.e., (\ref{clean-ev-prob})), is at least
\begin{align*}
    \mathbb{P}(\mathcal{E}_{\text{clean}}) \geq 1 - \frac{2 |A_0|}{T^3},
\end{align*}
the probability of its complement, i.e., the event where the clean event does not hold, is at most \( 2 |A_0| / T^3 \). In the worst case, the regret in this scenario can be at most \( T \), since each action could contribute at most unit regret per round. Thus, by conditioning on these two events and using (\ref{reg-ucb}), we obtain
\begin{align*}
    &\mathbb{E}\left[\mathcal{R}_T(A_0)\right] \\
    &= \mathbb{E}\left[\mathcal{R}_T(A_0) \mid \mathcal{E}_{\text{clean}}\right] \mathbb{P}(\mathcal{E}_{\text{clean}}) 
    + \mathbb{E}\left[\mathcal{R}_T(A_0) \mid \mathcal{E}_{\text{clean}}^c\right] \mathbb{P}(\mathcal{E}_{\text{clean}}^c) \\
    &\leq \mathcal{R}_T(A_0) + (2|A_0|/T^3) \cdot T.
\end{align*}
Next, we apply \ref{eq:disc}, which states that for uniform discretization, the number of arms in \( A_0 \) is bounded by \(|A_0| \leq \lambda_d/\psi^d\), for some constant \( \lambda_d > 0 \). Substituting this bound into the regret expression, we obtain
\begin{align*}
    \mathbb{E}\left[\mathcal{R}_T(A_0)\right] &\leq 2\sqrt{2}(\ell + 1) \sqrt{T \log T \cdot \frac{\lambda_d}{\psi^d}} + \frac{2\lambda_d}{\psi^d T^2} \\
    &= 2\sqrt{2}(\ell + 1) \sqrt{\frac{T \log T}{\psi^d} \lambda_d} + \frac{2\lambda_d}{\psi^d T^2}
\end{align*}
Since the last term is negligible for large \( T \), we introduce a constant \( \lambda_1 > 0 \) to simplify notation, yielding
\begin{align*}
    \mathbb{E}\left[\mathcal{R}_T(A_0)\right] \leq 2 \lambda_1\sqrt{2\lambda_d} (\ell + 1) \sqrt{\frac{T \log T}{\psi^d}}.
\end{align*}
Finally, combining this result with the definition of \( \delta_{A_0} \), which satisfies \( \delta_{A_0} \leq L\psi \), the expected regret for Algorithm (\ref{meta-inc-uniform}) on the infinite arm set \( A \) is given by
\begin{align} \label{fin-reg-ucb}
    \mathbb{E}\left[\mathcal{R}_T(A)\right] \leq T L\psi + 2 \lambda_1 \sqrt{2\lambda_d} (\ell + 1) \sqrt{\frac{T \log T}{\psi^d}}.
\end{align}

This bound highlights the trade-off between approximation error \( L\psi \) and the regret incurred on the discretized arm set \( A_0 \), demonstrating that the regret scales sublinearly in \( T \) while also depending on the granularity of the parameter \( \psi \).

To minimize the expected regret, we select the optimal discretization parameter \( \psi \) by minimizing the leading-order term in the regret bound from Equation (\ref{fin-reg-ucb}). Setting \(\psi = \left[8\lambda_1^2\lambda_d\left(\ell + 1\right)^2 T^{-1} L^{-2} (\log T)\right]^{1/(d+2)}\), we obtain the final regret bound.

Next, we analyze the total compensation, denoted by \( \mathcal{C}_T(A) \), which satisfies \( \mathcal{C}_T(A) = \mathcal{C}_T(A_0) \). This equivalence holds because compensation, unlike regret, is only incurred when the chosen arm deviates from the principal's recommended choice, both of which are restricted to the same set \( A_0 \). 

Let \( \hat{\mu}_t(a) \) denote the empirical average of the observed (drifted) rewards for an arm \( a \in A_0 \) up to time \( t \). Compensation is incurred at time \( t \) when the principal's recommended arm \( a_t \) is chosen over the greedy arm \( g_t \), yet their empirical reward estimates satisfy
\begin{align} \label{comp}
    \hat{\mu}_t(a_t) + \sqrt{\frac{2 \log T}{N_t(a_t)}} \geq \hat{\mu}_t(g_t) + \sqrt{\frac{2 \log T}{N_t(g_t)}}.
\end{align}
Rearranging, we obtain the compensation incurred at time \( t \),
\begin{align*}
    \kappa_t = \hat{\mu}_t(g_t) - \hat{\mu}_t(a_t) \leq \sqrt{\frac{2 \log T}{N_t(a_t)}}.
\end{align*}

To further bound the compensation, we use Equation (\ref{inst-regret-ucb}) along with Equation (\ref{lips}) in Section \ref{prob-form} of the main paper to establish a lower bound on the number of times an arm is pulled:
\begin{align} \label{n-lb}
    N_t(a_t) \geq \frac{8(\ell+1)^2 \log T}{L^2 \psi^2}.
\end{align}
Substituting this into the upper bound on \( \kappa_t \), we obtain
\begin{align}\label{kappa-ucb-ub}
    \kappa_t \leq \frac{L\psi}{\sqrt{2}(\ell+1)}.
\end{align}
The total expected compensation is given by summing over all time steps,
\begin{align*}
    \mathbb{E}\left[\mathcal{C}_T(A_0)\right] = \sum_{t=1}^{T} \kappa_t \leq \frac{T L\psi}{\sqrt{2}(\ell+1)}.
\end{align*}
Finally, substituting the optimal value of \( \psi \) from Equation (\ref{fin-reg-ucb}), we obtain the final bound on the total expected compensation.

\subsection{Analysis of Theorem 2 (Uniform Discretization of Contextual Lipschitz Bandits) with Reward Drift}
\noindent

\textbf{Problem Setting.}
We consider a contextual bandit framework in which each round \(t = 1,2,\dots,T\) unfolds as follows. A context \(x_t \in X \subseteq [0,1]^{d_x}\) is revealed to the learner, where \(X\) is equipped with a metric \(\Phi_X\). Based on this observed context, the learner then chooses an arm \(a_t \in A \subseteq [0,1]^{d_a}\), where \(A\) is equipped with a metric \(\Phi_A\). Finally, a random reward \(r_t\) is observed, with conditional expectation \(\nu(a_t, x_t)\). We assume \(X\) and \(A\) are compact metric spaces, making it possible to apply standard covering arguments in our subsequent analysis.

\vspace{1em}
\noindent
\textbf{Lipschitz Reward Function.}
We assume that the unknown reward function \(\nu: A \times X \to \mathbb{R}\), consistent with \cite{slivkins2011contextual}, is \(L\)-Lipschitz with respect to the product metric
\[
   \Phi\bigl((a,x),(a',x')\bigr) 
   \;=\; 
   \Phi_X(x,x') + \Phi_A(a,a').
\]
Formally, for all \((a,x),(a',x') \in A \times X\),
\[
   \bigl|\nu(a,x)\;-\;\nu(a',x')\bigr|
   \;\;\le\;
   L \,\bigl[\Phi_X(x,x') \;+\; \Phi_A(a,a')\bigr].
\]
We denote by \(d_x\) the covering dimension of \(X\) under \(\Phi_X\) and by \(d_a\) the covering dimension of \(A\) under \(\Phi_A\). Consequently, the product space \(A \times X\) endowed with \(\Phi\) has covering dimension \(d = d_a + d_x\). 

\vspace{1em}
\noindent
\textbf{Regret Definition and Decomposition.}
Let \(a^*(x_t) \in \arg\max_{a\in A} \nu(a,x_t)\) be an optimal arm for the observed context \(x_t\). Define the instantaneous regret on round \(t\) by
\[
   \max_{a\in A}\,\nu\bigl(a,x_t\bigr)
   \;-\;
   \nu\bigl(a_t,x_t\bigr).
\]
Summing these over the \(T\) rounds and taking expectations, we obtain the cumulative expected regret:
\[
   \mathbb{E}[\mathcal{R}_T]
   \;=\;
   \sum_{t=1}^T
     \Bigl[
        \max_{a\in A}\,\nu\bigl(a,x_t\bigr)
        \;-\;
        \mathbb{E}\bigl[\nu(a_t,x_t)\bigr]
     \Bigr].
\]
The proposed incentivized exploration algorithm, designed for continuous sets \(X\) and \(A\), operates by uniformly discretizing these spaces. Specifically, we define finite \(\psi\)-nets \(X_0 \subseteq X\) and \(A_0 \subseteq A\), ensuring that each \(x \in X\) has a corresponding \(x_0 \in X_0\) within \(\psi\) under \(\Phi_X\) (i.e., \(\Phi_X(x, x_0) \leq \psi\)), and similarly, each \(a \in A\) has a nearest \(a_0 \in A_0\) within \(\psi\) under \(\Phi_A\). To facilitate this discretization, we introduce a ``snapping'' function \(\xi: X \to X_0\), which maps each observed context \(x_t\) to its closest grid point \(\xi(x_t)\). The learner then selects arms exclusively from \(A_0\) based on these discretized contexts, effectively reducing the original infinite decision space to a finite one:
\begin{align*}
   &\mathbb{E}[\mathcal{R}_T] \\
   &\;=\;
   \underbrace{\sum_{t=1}^T 
      \Bigl[\max_{a\in A}\,\nu\bigl(a,x_t\bigr)
            \;-\;
            \max_{a_0\in A_0}\,\nu\bigl(a_0,\xi(x_t)\bigr)
      \Bigr]}_{\text{Discretization Error}} \\
   &\;+\;
   \underbrace{\sum_{t=1}^T 
      \Bigl[\max_{a_0\in A_0}\,\nu\bigl(a_0,\xi(x_t)\bigr)
            \;-\;
            \mathbb{E}\bigl[\nu(a_t,x_t)\bigr]
      \Bigr]}_{\text{Finite-Bandit Regret}}.
\end{align*}
The first term in this decomposition captures the inherent approximation gap incurred by working with the discrete subsets \((X_0,A_0)\), while the second term reflects the standard multi-armed bandit (or contextual bandit) regret over a finite set of actions \(A_0\), conditioned on discrete contexts from \(X_0\).

\vspace{1em}
\noindent
\textbf{Contextual UCB via Incentivized Exploration.}
To solve the bandit problem on the discretized space \(X_0 \times A_0\), the algorithm (Algorithm~\ref{meta-inc-contextual}) maintains a separate instance of a UCB-based incentivized exploration procedure for each discrete context \(x_0 \in X_0\). That is, upon observing context \(x_t\), the algorithm snaps it to its nearest grid point \(x_0 = \xi(x_t)\), and then invokes the UCB routine for that specific \(x_0\) to select an arm \(a_t \in A_0\). This modular treatment allows efficient reuse of reward estimates and confidence intervals across rounds for each discrete context and ensures that regret accumulates independently per context.

\vspace{1em}
\begin{lemma}[Discretization Error]
\label{lem:discretization_error}
Suppose \(\nu\) is \(L\)-Lipschitz with respect to \(\Phi\). For any \(\psi\)-nets \(X_0 \subseteq X\) and \(A_0 \subseteq A\), let \(\xi(x_t)\) be the point in \(X_0\) nearest to \(x_t\). Then, summing over \(T\) rounds, the total discretization error satisfies:
\[
   \sum_{t=1}^T
      \Bigl[\max_{a\in A}\,\nu\bigl(a,x_t\bigr)
            \;-\;
            \max_{a_0\in A_0}\,\nu\bigl(a_0,\xi(x_t)\bigr)\Bigr]
   \;\le\;
   2\,L\,T\,\psi.
\]
\end{lemma}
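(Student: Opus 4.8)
The plan is to bound the summand for each fixed round $t$ by $2L\psi$ and then sum over the $T$ rounds. Fix $t$ and write $x_0 = \xi(x_t)$, so by the defining property of the $\psi$-net $X_0$ we have $\Phi_X(x_t, x_0) \le \psi$. Let $a^*(x_t) \in \arg\max_{a \in A} \nu(a, x_t)$ be an optimal arm for the true context, and let $a_0^* \in A_0$ be a nearest net point to $a^*(x_t)$, so that $\Phi_A(a^*(x_t), a_0^*) \le \psi$ by the $\psi$-net property of $A_0$.

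The first step is the trivial observation that since $a_0^* \in A_0$,
\[
   \max_{a_0 \in A_0} \nu(a_0, x_0) \;\ge\; \nu(a_0^*, x_0),
\]
so the per-round discretization gap is upper bounded by $\nu(a^*(x_t), x_t) - \nu(a_0^*, x_0)$. The second step applies the $L$-Lipschitz property of $\nu$ with respect to the product metric $\Phi$:
\[
   \nu(a^*(x_t), x_t) - \nu(a_0^*, x_0)
   \;\le\;
   L\bigl[\Phi_X(x_t, x_0) + \Phi_A(a^*(x_t), a_0^*)\bigr]
   \;\le\;
   L(\psi + \psi)
   \;=\;
   2L\psi.
\]
Summing this bound over $t = 1, \dots, T$ yields the claimed inequality $\sum_{t=1}^T [\max_{a\in A}\nu(a,x_t) - \max_{a_0\in A_0}\nu(a_0,\xi(x_t))] \le 2LT\psi$.

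There is no real obstacle here; the argument is a direct consequence of the Lipschitz assumption together with the two covering properties (one for contexts via $\xi$, one for arms via the nearest net point). The only point requiring slight care is that the comparison must be made at the \emph{same} (snapped) context $x_0$ on both sides of the Lipschitz step — i.e., we compare $\nu(a^*(x_t), x_t)$ to $\nu(a_0^*, x_0)$ in a single application of the Lipschitz bound rather than chaining through an intermediate term — so that the two $\psi$-contributions (one from $\Phi_X(x_t,x_0)$, one from $\Phi_A(a^*(x_t),a_0^*)$) add up to exactly $2\psi$ and no triangle-inequality slack is introduced.
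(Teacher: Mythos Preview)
Your proof is correct and follows essentially the same approach as the paper's own proof: fix a round, pick the optimal arm $a^*(x_t)$ and a nearest net arm $a_0^*$, use the $\psi$-net properties for both contexts and arms, and apply the Lipschitz condition to bound the per-round gap by $2L\psi$. The only cosmetic difference is that the paper chains two one-coordinate Lipschitz steps (first in the arm, then in the context) through the intermediate term $\nu(a_0^*, x_t)$, whereas you apply the product-metric Lipschitz bound in a single step; both yield the identical $2L\psi$ bound.
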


\begin{proof}
In each round \(t\), let \(x_t \in X\) and define \(a^*(x_t) \;\in\; \arg\max_{a \in A}\,\nu\bigl(a,x_t\bigr)\).
Since \(A_0 \subseteq A\) is a \(\psi\)-net under \(\Phi_A\), there is an \(a_0^*(x_t)\in A_0\) such that \(\Phi_A\!\bigl(a^*(x_t),\,a_0^*(x_t)\bigr)\le \psi.\) Also, \(\xi(x_t)\) is the nearest point in \(X_0\subseteq X\) to \(x_t\) under \(\Phi_X,\) hence \(\Phi_X\!\bigl(x_t,\,\xi(x_t)\bigr)\le\psi.\) By the \(L\)-Lipschitz property of \(\nu\) with respect to both \(\Phi_A\) and \(\Phi_X\), we obtain:
\[
   \nu\bigl(a^*(x_t),\,x_t\bigr)
   \;\le\;
   \nu\bigl(a_0^*(x_t),\,x_t\bigr) + L\psi,
\]
\[
   \nu\bigl(a_0^*(x_t),\,x_t\bigr)
   \;\le\;
   \nu\bigl(a_0^*(x_t),\,\xi(x_t)\bigr) + L\psi.
\]
Combining these yields 
\[
   \nu\bigl(a^*(x_t),\,x_t\bigr)
   \;\le\;
   \nu\bigl(a_0^*(x_t),\,\xi(x_t)\bigr) + 2L\psi.
\]
Since \(\max_{a \in A}\,\nu(a,x_t) = \nu\bigl(a^*(x_t), x_t\bigr)\) and 
\(\max_{a_0 \in A_0}\,\nu(a_0, \xi(x_t)\bigr) \,\ge\, \nu\bigl(a_0^*(x_t), \xi(x_t)\bigr),\)
it follows that 
\begin{align}\label{disc-err-inst}
   \max_{a \in A}\,\nu(a,x_t)
   \;-\;
   \max_{a_0 \in A_0}\,\nu\bigl(a_0,\xi(x_t)\bigr)
   \;\le\;
   2L\psi. 
\end{align} 
Summing equation (\ref{disc-err-inst}) over \(t = 1,\dots,T\) gives us the required expression.
\end{proof}

\vspace{1em}
\noindent
\textbf{Bounding the Finite-Bandit Regret.}
In order to obtain a complete bound on \(\mathbb{E}[\mathcal{R}_T]\), one must add the regret arising from the discretization step to the regret accumulated by the finite-bandit procedure that operates on the discrete sets \(X_0\subseteq X\) and \(A_0\subseteq A\). From Lemma \ref{lem:discretization_error} we get that the discretization induces an error of \(2LT\psi\).

Meanwhile, for each discrete context \(x_0\in X_0\), the incentivized exploration algorithm (Algorithm~\ref{meta-inc-contextual}) applies Algorithm~\ref{meta-inc-uniform} to the restricted arm set \(A_0\), maintaining separate statistics per context. That is, a separate instance of the UCB policy is run for each \(x_0\), treating the problem as a standard multi-armed bandit over \(A_0\), with observations and updates only when \(\xi(x_t) = x_0\). 

From Theorem (\ref{thm-reg-ucb}), the regret for each context is bounded by:
\[
  2\sqrt{2}\,(\ell+1)\,\sqrt{\,T_{x_0}\,\lvert A_0\rvert\,\log T}\,,
\]
where \(T_{x_0}\) is the number of rounds in which the snapped context was \(x_0\). Since \(\sum_{x_0\in X_0} T_{x_0} = T\), summing over all contexts yields:
\begin{align*}
  \sum_{x_0\in X_0}
    2\sqrt{2}\,(\ell+1)\,&\sqrt{T_{x_0}\,\lvert A_0\rvert\,\log T}
  \;=\; \\
  &2\sqrt{2}\,(\ell+1)\,\sqrt{\,\lvert A_0\rvert\,\log T\,}
  \;\sum_{x_0\in X_0}\,\sqrt{T_{x_0}}.
\end{align*}
Applying Jensen’s inequality:
\[
  \sum_{x_0\in X_0}\,\sqrt{T_{x_0}}
  \;\le\;
  \sqrt{\,\lvert X_0\rvert\;\sum_{x_0\in X_0} T_{x_0}}
  \;=\;
  \sqrt{\,\lvert X_0\rvert\,T\,}.
\]
Thus, the finite-bandit regret is at most:
\[
2\sqrt{2}\,(\ell+1)\,
  \sqrt{\,\lvert A_0\rvert\,\lvert X_0\rvert\,T\,\log T}.
\]
With uniform discretization, the sizes of \(A_0\) and \(X_0\) are bounded as:
\[
\lvert A_0\rvert \leq \lambda_a \psi^{-d_a}, \quad
\lvert X_0\rvert \leq \lambda_x \psi^{-d_x},
\]
hence:
\[
\lvert A_0\rvert \cdot \lvert X_0\rvert \leq \lambda_x \lambda_a \psi^{-d},
\quad\text{where } d = d_x + d_a.
\]
Substituting into the regret bound:
\[
\mathbb{E}[\mathcal{R}_T] \leq 2L T \psi + 2\sqrt{2}(\ell + 1) \sqrt{\frac{\lambda_x \lambda_a T \log T}{\psi^d}}.
\]
To minimize this upper bound, we choose the optimal value of \(\psi\) as:
\[
\psi = \Bigl[ \sqrt{2 \lambda_a \lambda_x}(\ell + 1) L^{-1} \Bigr]^{2/(d+2)} 
T^{- 1/(d+2)} (\log T)^{1/(d+2)}.
\]
Substituting back yields the final regret bound:
\[
\mathbb{E}[\mathcal{R}_T] 
\;\leq\;
C(\lambda_a, \lambda_x, \ell)\, L^{d/(d+2)}\, 
T^{(d+1)/(d+2)}\, (\log T)^{1/(d+2)},
\]
with
\begin{align*}
&C(\lambda_a, \lambda_x, \ell) = \\
&2 \Bigl( \sqrt{2 \lambda_a \lambda_x} (\ell + 1) \Bigr)^{\frac{2}{(d+2)}} 
\Bigl[ 1 + \sqrt{2} (\ell+1) (\lambda_a \lambda_x)^{\frac{1}{2} - \frac{d}{(2(d+2))}} \Bigr].
\end{align*}

\paragraph{Compensation.}
The compensation structure in the contextual setting remains analogous to the stochastic case. Specifically, once a context \(x_t\) is snapped to its nearest grid point \(\xi(x_t) = x_0\), the algorithm invokes the incentivized UCB subroutine on the finite arm set \(A_0\). The compensation paid at each round is governed by the deviation from greedy choice under empirical drifted rewards, and—as established in Equation (\ref{kappa-ucb-ub})—is upper bounded by:
\[
\kappa_t \leq \frac{L\psi}{\sqrt{2}(\ell+1)}.
\]
This bound holds for all contexts, since the compensation depends only on the arm-space resolution and reward smoothness, not on the context itself. Therefore, the total expected compensation across \(T\) rounds is bounded by:
\[
\mathbb{E}[\mathcal{C}_T] \leq \frac{T L \psi}{\sqrt{2}(\ell+1)}.
\]
Substituting the optimal \(\psi\), we get the final compensation bound that scales as:
\[
\mathbb{E}[\mathcal{C}_T] = \Tilde{O}\left(T^{\frac{d+1}{d+2}}\right).
\]

\end{document}